\documentclass[lettersize,journal]{IEEEtran}
\usepackage{amsmath,amsfonts}
\usepackage{amsthm}

\theoremstyle{definition}
\newtheorem{definition}{Definition}
\newtheorem{remark}{Remark}

\theoremstyle{plain}
\newtheorem{lemma}{Lemma}
\newtheorem{proposition}{Proposition}
\usepackage{algorithmic}
\usepackage{algorithm}
\usepackage{booktabs}
\usepackage{array}
\usepackage[caption=false,font=normalsize,labelfont=normalfont,textfont=normalfont]{subfig}
\usepackage{textcomp}
\usepackage{stfloats}
\usepackage{url}
\usepackage{verbatim}
\usepackage{graphicx}
\usepackage{cite}
\usepackage{microtype}
\usepackage{makecell}
\usepackage{multirow}
\usepackage{hyperref}
\usepackage{float}
\usepackage{caption}
\usepackage{enumitem}

\usepackage{xcolor}
\usepackage{colortbl}
\usepackage[normalem]{ulem}
\newcommand{\add}[1]{#1}
\newcommand{\del}[1]{}

\colorlet{red}{black}
\colorlet{blue}{black}

\def\IEEEbibitemsep{0pt plus .5pt}

\begin{document}

\title{RMRRT: Riemannian Barrier Metric RRT \\ for Inequality-Aware Steering on Equality Manifolds}

\author{Minhyeong Kang$^{1}$ and Sanghyun Kim$^{1,2}$,~\IEEEmembership{Member,~IEEE}
\thanks{This research was supported by the National Research Foundation of Korea (NRF) grant funded by the Korea government (MSIT) (No. RS-2024-00461583, RS-2024-00411007), by the National Research Council of Science \& Technology (NST) grant funded by the Korea government (MSIT) (No. GTL25042-000), and by the Korea Basic Science Institute (National Research Facilities and Equipment Center) grant funded by the Ministry of Science and ICT (No. RS-2025-00564593). (Corresponding author: Sanghyun Kim.)}
\thanks{$^{1}$Minhyeong Kang and Sanghyun Kim are with the Department of Mechanical Engineering, Kyung Hee University, Yongin, South Korea (e-mail: kingtyphoon@khu.ac.kr; kim87@khu.ac.kr).}
\thanks{$^{2}$Sanghyun Kim is also with the Advanced Institute of Convergence Technology (AICT), Suwon, South Korea.}}

\markboth{ }%
{Kang \MakeLowercase{\textit{and}} Kim: Riemannian Barrier Metric RRT for Inequality-Aware Steering on Equality Manifolds}


\maketitle

\begin{abstract}
This paper presents a motion planning framework that unifies equality and inequality constraints within a single geometric formulation for sampling-based planning in high-dimensional robotic systems. In conventional sampling-based planners, equality constraints are typically enforced through projection, whereas inequality constraints are handled separately through binary validity checks such as collision testing, often leading to inefficient exploration. To address this limitation, we propose Riemannian Barrier Metric RRT (RMRRT), which constructs a unified local geometry for planning on equality-constrained manifolds. RMRRT first builds an ambient barrier metric from inequality-sensitive barrier terms and then induces a tangent-space metric via a \(G\)-orthogonal projection associated with the equality constraints. The resulting tangent-space metric is used consistently in both steering and nearest-neighbor selection, biasing exploration away from nearby inequality boundaries while preserving first-order equality consistency. \add{In this work, the metric is instantiated from signed-distance-based geometric proxy inequalities to provide collision-informative tangent-space directions; hard feasibility is enforced separately through standard validity checks.} Experimental results show that RMRRT achieves a 100\% success rate across diverse constrained manipulation tasks in both simulation and real-world settings, while reducing planning time relative to representative constrained planning baselines. Ablation studies further demonstrate that the proposed metric improves exploration quality by reducing rejected samples and shortening path length. Experiment videos and source code are available at: \url{https://rcilab.github.io/rmrrt}

\end{abstract}

\begin{IEEEkeywords}
Constrained motion planning, Sampling-based planning, Riemannian metric, Barrier function, Multi-arm manipulation.
\end{IEEEkeywords}

\section{Introduction} \label{section1}

\IEEEPARstart{S}{ampling}-based motion planning becomes particularly challenging when robotic systems are subject to simultaneous equality and inequality constraints \cite{mu2025efficient, stilman2007task, xian2017closed}. Equality constraints commonly arise from closed-chain kinematics \cite{bordalba2020randomized, jang2022motion, park2024constrained}, grasp maintenance \cite{liu2025planning, dogar2019multi, zimmermann2020multi}, and cooperative multi-arm manipulation \cite{shi2025distributed, alonso2015local}, restricting feasible configurations to an implicit manifold embedded in the ambient configuration space \cite{suh2011tangent, jaillet2012path, kim2016tangent}. Inequality constraints, such as obstacle avoidance \cite{quiroz2019whole, guo2012new}, self-collision \cite{grassmann2018smooth, Hu_2021_CVPR, cai2025just}, and joint limits \cite{stilman2007task, li2024quadratic}, further restrict admissible regions through state-dependent boundaries. Under such conditions, feasible configurations typically occupy only a small fraction of the ambient space, making exploration efficiency and steering quality central to planning performance. \add{Fig.~\ref{fig_1} illustrates a representative coupled-constraint task in which three manipulators cooperatively transport a chair while maintaining closed-chain grasp constraints and avoiding collisions.}

In conventional sampling-based planners, equality constraints are typically enforced through projection onto the constraint manifold, whereas inequality constraints are handled separately through binary validity checks such as collision testing \cite{suh2011tangent, jaillet2012path, kim2016tangent, berenson2009manipulation}. This decoupled treatment often leads to inefficient sample rejection and unstable steering, particularly in highly constrained environments where admissible regions are narrow or fragmented. Consequently, integrating equality and inequality constraints into a unified exploration framework remains a fundamental challenge in sampling-based motion planning. In this work, we address this challenge by proposing a unified geometric formulation that directly incorporates inequality-induced boundary geometry into exploration on equality-constrained manifolds.

\begin{figure}[!t]
\centering
\includegraphics[width=3.5in]{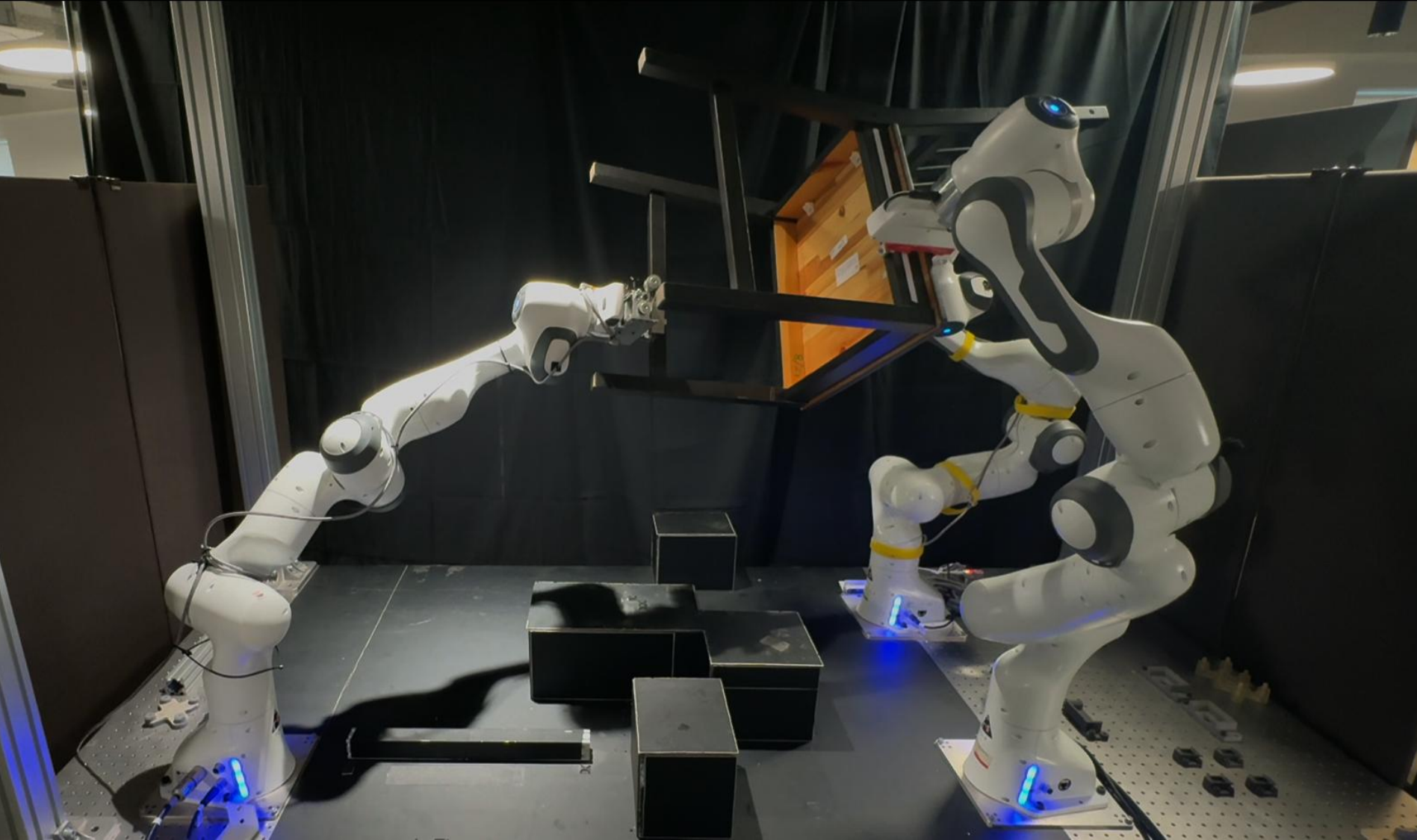}
\caption{
Real-world demonstration of RMRRT with three manipulators cooperatively transporting a chair under simultaneous equality and inequality constraints for closed-chain grasp maintenance and collision avoidance.}
\vspace{-0.1cm}
\label{fig_1}
\end{figure}

\vspace{-0.1cm}
\subsection{Related Work}

Sampling-based robot motion planning under constraints has been studied from multiple perspectives, largely distinguished by how equality and inequality constraints are represented and incorporated into planning. Prior work can be broadly organized into four categories:
(i) manifold-based approaches that plan directly on equality-constrained manifolds,
(ii) metric-based approaches that encode inequality-induced geometry into costs, distances, or local metrics,
(iii) data-driven approaches that accelerate exploration using learned or precomputed priors, and
\add{(iv) trajectory-optimization approaches that treat equality and inequality constraints jointly over an entire trajectory.}

\subsubsection{Manifold-Based Motion Planning}

Manifold-based motion planners operate directly on implicitly defined equality-constrained manifolds and aim to preserve equality feasibility throughout planning \cite{jang2022motion,suh2011tangent,jaillet2012path,kim2016tangent}. Atlas-based planners approximate the manifold using collections of local charts and transition between neighboring charts during exploration, improving robustness in highly curved regions \cite{jaillet2012path,jaillet2016path}. Tangent-space methods instead generate local motions in the Jacobian-defined tangent space and apply manifold-consistent correction, such as local projection, to obtain equality-feasible extensions \cite{suh2011tangent,kim2016tangent}. In most such approaches, however, tangent-space information is used primarily to maintain equality feasibility, whereas nearest-neighbor selection and steering are not consistently informed by nearby inequality boundaries. Consequently, inequality constraints often influence exploration only through post-hoc rejection by external validity checks, such as collision testing \cite{kingston2018sampling}. This separation can lead to repeated unproductive extension attempts in narrow or cluttered environments \cite{hsu2003bridge,pan2016fast}.

\subsubsection{Metric-Based Motion Planning}

Metric-based approaches incorporate inequality constraints into costs, distance measures, or state-dependent metrics to discourage motion toward constraint boundaries and bias exploration toward admissible regions. Classical examples include artificial potential fields and continuous obstacle-based trajectory costs \cite{khatib1986real,ratliff2009chomp}.
More recent Riemannian approaches construct non-Euclidean metrics from curvature, Hessian, or other differential information associated with inequality constraints \cite{manjunath2021safe,yang2019sampling}. Related reactive frameworks such as RMPflow use task-space Riemannian structures to combine multiple local motion objectives \cite{cheng2021rmpflow},
\add{building on the Riemannian Motion Policies of Ratliff \emph{et~al.} \cite{ratliff2018riemannian}, which formalize a metric-as-geometry viewpoint for task-space motion generation.} Although these approaches effectively encode inequality-aware local geometry, they are generally formulated in the ambient configuration space and do not directly generate equality-consistent motion on an implicitly defined constraint manifold. Additional projection or correction mechanisms are therefore required when they are applied to equality-constrained steering or tree expansion.

\subsubsection{Data-Driven Motion Planning}

Data-driven approaches accelerate constrained planning by exploiting learned or precomputed structure in the feasible space. Representative directions include learning low-dimensional latent representations of equality-constrained manifolds
\cite{park2024constrained,zhang2025local}, learning samplers or policies that bias expansion toward promising regions, and constructing offline roadmaps that reduce online search through precomputed feasible-state connectivity \cite{jang2022motion,csucan2012motion}. These methods can substantially improve planning efficiency when the learned prior or precomputed structure matches the target problem. Their performance, however, may depend on training coverage, prior quality, and the suitability of the offline representation under changes in environments, robot models, or constraints.

\color{red}
\subsubsection{Trajectory Optimization on Constraint Manifolds}

A complementary line of work addresses equality and inequality constraints jointly through trajectory optimization on constraint manifolds. Representative approaches include embedded sequential convex programming \cite{bonalli2019trajectory}, direct collocation for constrained robotic systems \cite{bordalba2022direct}, and analytic-IK-based bimanual planning \cite{cohn2024constrained}. These methods consider both classes of constraints at the trajectory level and typically refine an initialized trajectory toward a locally optimal solution. This setting differs from the satisficing, exploration-based problem considered here. Trajectory optimization generally improves a prescribed trajectory initialization according to an objective and a set of constraints, whereas sampling-based planning focuses on discovering a feasible connection through nonlocal exploration. The two paradigms therefore play complementary roles: trajectory optimization provides strong local refinement, while sampling-based planning can explore alternative connections and is less dependent on a favorable initialization.
\color{black}

\subsubsection{Structural Gap Across Existing Approaches}

Despite their respective strengths, the approaches reviewed above do not generally construct a single explicit local geometry that is simultaneously equality-consistent and inequality-aware throughout sampling-based exploration. In manifold-based planners, equality constraints define admissible first-order motion through tangent spaces and projection operators, but the geometry used for nearest-neighbor selection and steering remains largely insensitive to nearby inequality boundaries. In metric-based planners, inequality constraints reshape local distances or energies, but the resulting geometry is typically defined in the ambient space and does not directly yield equality-consistent motion. Data-driven methods can bias exploration toward favorable regions, but they generally do so through learned or precomputed priors rather than through an explicit geometric object coupling the two constraint classes. Trajectory optimization can treat both classes of constraints over an entire trajectory, but it typically operates by locally refining an initialization rather than by defining an exploration geometry for sampling-based search.

This separation reflects a geometric mismatch rather than merely a modeling choice.
A natural way to encode inequality constraints as a continuous local geometry is through a barrier-induced metric that grows near active constraint boundaries.
Such a metric is naturally defined in the ambient configuration space, whereas admissible first-order motions under equality constraints are restricted to a Jacobian-defined tangent space.
Simply combining an ambient barrier metric with a standard Euclidean null-space projector does not preserve compatibility between these two geometric structures.
The projection itself must be defined with respect to the metric that shapes local motion. This observation leads to the central construction of RMRRT: a metric-compatible tangent projection and its induced tangent-space metric, used consistently in nearest-neighbor selection, local steering, and manifold correction.
Together, these components provide a common geometric structure through which equality feasibility and inequality-aware exploration are coupled throughout the planner.

\add{The work of Bohigas \emph{et~al.} \cite{bohigas2013planning} is particularly relevant because it also introduces a unified geometric construction for constrained sampling-based planning. Their method constructs a global, higher-dimensional geometric representation to obtain singularity-free paths in closed-chain systems. RMRRT instead constructs a local barrier-induced tangent-space metric that biases exploration away from obstacles and other active inequality boundaries. The two methods therefore address different geometric objectives and operate at different scales: Bohigas \emph{et~al.} use a global construction for singularity avoidance, whereas RMRRT uses a local metric for inequality-aware exploration on an equality-constrained manifold.}

\vspace{-0.3cm}

\subsection{Overview of Our Approach}

As discussed above, existing sampling-based planners do not provide a single local geometry that is simultaneously equality-consistent and inequality-aware. This geometric mismatch requires a projection that is compatible with the metric defining the local geometry, which leads to the \(G\)-orthogonal projector and the induced tangent-space metric used in this work. Motivated by this observation, we propose Riemannian Barrier Metric RRT (RMRRT), a sampling-based planner that redefines local planning geometry on equality-constrained manifolds. The key idea is to endow equality-consistent motions with a barrier-informed Riemannian structure that captures the local influence of nearby geometric inequality boundaries. 

Concretely, we first construct a state-dependent ambient barrier metric from inequality-sensitive barrier terms, and then induce a metric on the tangent space through a barrier-compatible projection associated with the equality-constraint Jacobian. The resulting geometry not only preserves equality consistency, but also reshapes local exploration according to nearby inequality boundaries. \add{In the evaluated instantiation, this boundary geometry is constructed from signed-distance-based geometric proxy inequalities, which capture the dominant robot--obstacle boundary geometry in cluttered constrained settings. Inequalities without tractable differential structure are handled separately through standard validity checks. Thus, the metric acts as an exploration bias, while hard feasibility is enforced independently by validity checking.}

Based on this induced geometry, RMRRT performs local steering by selecting tangent directions that make first-order progress toward the target while penalizing motions toward nearby inequality boundaries. The same induced tangent-space metric is also used in nearest-neighbor selection, yielding a notion of local connectability that is consistent with equality-constrained expansion. As a result, inequality information affects the exploration mechanism itself rather than entering only through post-hoc validity checks.

\noindent The main contributions of this paper are summarized as follows:
\begin{itemize}
    \item We introduce an ambient barrier metric and a \(G\)-orthogonal projection that together induce a tangent-space metric for sampling-based planning on equality-constrained manifolds. This construction captures both manifold structure and inequality-boundary sensitivity within a single local geometry.

    \item Building on this construction, we develop a bidirectional RRT-based planner for inequality-aware exploration on equality-constrained manifolds. The proposed planner incorporates the induced tangent-space metric into the expansion process so as to bias tree growth away from nearby inequality boundaries while preserving equality consistency.

    \item We use the same induced tangent-space metric consistently in both steering and nearest-neighbor selection, establishing planner-wide geometric consistency during exploration. This aligns node selection with the local directions favored during expansion, in contrast to planners that rely on Euclidean proximity for selection and enforce equality consistency only during steering.

    \item We validate the proposed method through simulation and real-world experiments on constrained manipulation tasks involving multi-arm and humanoid systems. The results show improved planning efficiency and reliability over representative constrained planning baselines in challenging environments with coupled equality and inequality constraints~\cite{berenson2009manipulation,jaillet2012path,kim2016tangent,csucan2012motion,park2024constrained}.
\end{itemize}

\noindent The remainder of this paper is organized as follows. Section~\ref{section2} introduces the necessary preliminaries, including equality-constrained manifolds and barrier-based metrics. Section~\ref{section3} presents the proposed RMRRT methodology in detail. Section~\ref{section4} reports experimental results on a variety of constrained manipulation scenarios. Finally, Section~\ref{section5} concludes the paper and discusses directions for future work.

\section{Preliminaries} \label{section2}
\begin{figure*}[!t]
\centering
\subfloat[]{\includegraphics[width=3.5in]{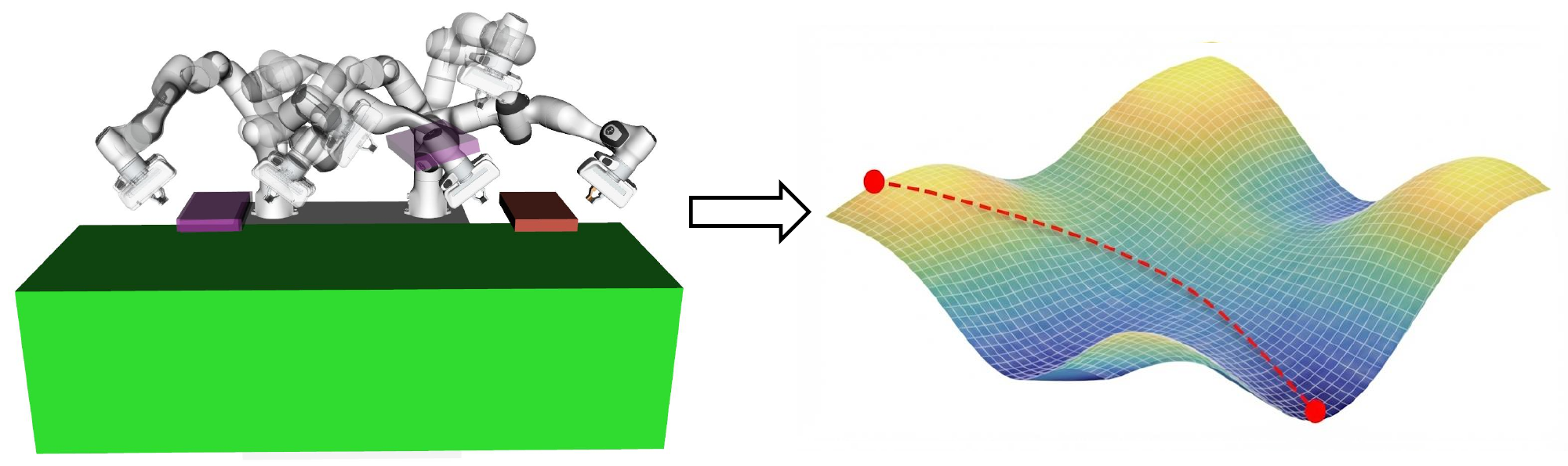}%
\label{fig2_a}}
\hfil
\subfloat[]{\includegraphics[width=3.5in]{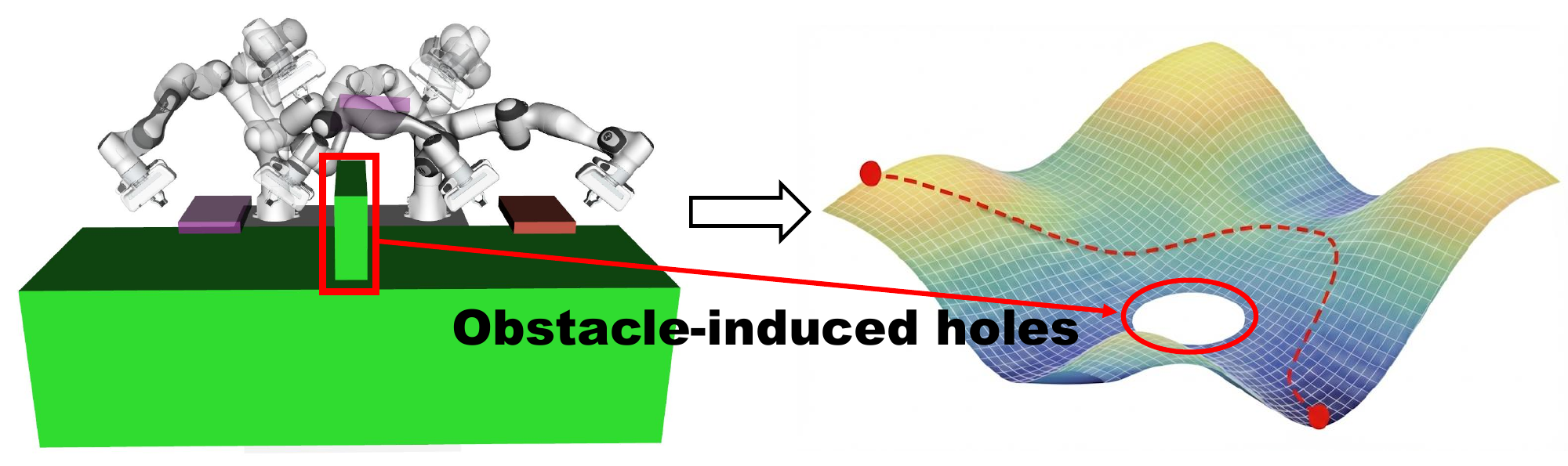}%
\label{fig2_b}}
\caption{Conceptual illustration of inequality-induced geometry on an equality-constrained manifold. (a) Equality-constrained manifold under the implicit assumption of full admissibility, as commonly treated in tangent-space and projection-based planners. (b) The same manifold after removing infeasible regions induced by inequality constraints, which form holes in the feasible subset. Although the equality constraint is unchanged, the admissible region exhibits nontrivial geometry that is not captured by conventional steering. RMRRT accounts for this geometry through a Riemannian metric defined on the tangent space.}
\vspace{-0.1cm}
\label{fig2}
\end{figure*}
This section introduces the notation and geometric preliminaries used throughout the paper. Specifically, we formulate the motion planning problem under equality and inequality constraints, review equality-constrained manifolds and their tangent spaces, and summarize barrier-based representations of inequality constraints used in the metric construction of Section~\ref{section3}.

\noindent\textit{Notation.}
Throughout the paper, $n$ denotes the configuration-space dimension, $m_{\mathrm{eq}}$ the number of equality constraints, \add{$m_{\mathrm{ineq}}$} the number of inequality constraints, and $m_{\mathrm{proxy}}$ the number of geometric proxy points used in the barrier metric. The scalar $\lambda>0$ denotes the metric regularization parameter, and Lagrange multipliers are written as $\nu$ or $\mu$. The equality-constraint Jacobian is denoted by $J(q)\in\mathbb{R}^{m_{\mathrm{eq}}\times n}$, while proxy-point Jacobians are written as $J_j(q)\in\mathbb{R}^{3\times n}$\add{, for $j\in\{1,\dots,m_{\mathrm{proxy}}\}$}. \add{A proxy point \(p_j(q) \in \mathbb{R}^3\), \(j \in \{1,\ldots,m_{\mathrm{proxy}}\}\), is a body-fixed point attached to a kinematic link \(L_j\).
Its world position is obtained from forward kinematics as
\begin{equation}
    p_j(q) = T_{L_j}(q)p_j^{\mathrm{local}},
\end{equation}
where \(T_{L_j}(q) \in SE(3)\) is the link transform and \(p_j^{\mathrm{local}}\) is a fixed offset defined in the link frame.
The mapping \(q \mapsto p_j(q)\) introduces no additional state.
The corresponding proxy-point Jacobian is
\begin{equation}
    J_j(q)
    =
    \frac{\partial p_j(q)}{\partial q}
    \in \mathbb{R}^{3 \times n},
\end{equation}
which is the linear part of the manipulator Jacobian evaluated at \(p_j^{\mathrm{local}}\).
Each proxy point carries a signed distance \(d_j(q)\) to the nearest obstacle, computed as an analytic point-to-oriented-bounding-box distance minimized over the obstacle set.
The triple
\begin{equation}
    \{p_j(q), J_j(q), d_j(q)\}
\end{equation}
is the input to the ambient barrier metric constructed in Section \ref{subsec:rbmetric}.}

\subsection{Problem Formulation}
We consider motion planning problems for robotic systems operating in a continuous configuration space \(\mathcal{Q} \subset \mathbb{R}^n\), where a configuration is denoted by \(q \in \mathcal{Q}\). The system is subject to both equality and inequality constraints, which jointly define the feasible configuration space. Equality constraints are expressed as
\begin{equation}
\label{eq:equality_constraint}
g(q) = 0,
\end{equation}
where \(g: \mathbb{R}^n \rightarrow \mathbb{R}^{m_{\mathrm{eq}}}\) is a differentiable constraint function. These constraints restrict feasible configurations to an implicitly defined manifold embedded in the ambient configuration space. 

\noindent In addition, inequality constraints are given by
\begin{equation}
\label{eq:inequality_constraint}
h_i(q) > 0, \quad i = 1, \dots, \color{red}{m_{\mathrm{ineq}}}\color{black}
\end{equation}
where each \(h_i: \mathbb{R}^n \rightarrow \mathbb{R}\) represents a state-dependent constraint. The strict inequalities $h_i(q)>0$ define an open feasible set; this is deliberate, because the log-barrier used to construct the metric (Section~\ref{sec:prelim_barrier}) is finite only on the interior $h_i>0$ and diverges as $h_i\to 0$, while hard feasibility is in any case enforced separately by the state- and edge-validity checks. Together, the inequality constraints define admissible regions within the equality-constrained manifold. The overall feasible configuration set is therefore defined as
\begin{equation}
\label{eq:feasible_set}
\mathcal{F} = \left\{ q \in \mathcal{Q} \;\middle|\; g(q) = 0,\; h_i(q) > 0,\; \forall i \right\}.
\end{equation}
We assume that valid start and goal configurations lie within \(\mathcal{F}\), and focus on planning collision-free motions that remain feasible throughout the planning process. In sampling-based motion planning, this requires generating tree expansions that preserve equality constraints while avoiding violations of inequality constraints.

\subsection{Equality-Constrained Manifolds and Tangent Spaces}
\label{sec:prelim_tangent}

Equality constraints of the form \(g(q) = 0\) define an implicit manifold embedded in the ambient configuration space \(\mathcal{Q}\). Specifically, the equality-constrained manifold is given by
\begin{equation}
\mathcal{M}=\{\, q\in\mathcal{Q}\mid g(q)=0 \,\}.
\label{eq:manifold_def}
\end{equation}
Let $J(q)$ denote the Jacobian of the constraint function,
\begin{equation}
J(q)=\frac{\partial g(q)}{\partial q}\in\mathbb{R}^{m_{\mathrm{eq}}\times n}.
\label{eq:jacobian_def}
\end{equation}
If $J(q)$ has full row rank $m_{\mathrm{eq}}$ for all $q\in\mathcal{M}$, then $\mathcal{M}$ is a smooth manifold of dimension $n-m_{\mathrm{eq}}$\footnote{This condition corresponds to the standard regularity assumption of the implicit function theorem.}. At a configuration $q\in\mathcal{M}$, the tangent space of $\mathcal{M}$ is defined as
\begin{equation}
T_q\mathcal{M}=\{\, v\in\mathbb{R}^n \mid J(q)v=0 \,\}.
\label{eq:tangent_space}
\end{equation}
This space characterizes the set of infinitesimal velocities that preserve the equality constraints to first order. In sampling-based motion planning, tangent directions are used to construct local expansions, and a retraction is applied when needed to keep the expanded state on the manifold $\mathcal{M}$. Moreover, when $J(q)$ has full row rank $m_{\mathrm{eq}}$, the matrix $J(q)J(q)^\top$ is invertible, which allows defining the orthogonal projector onto $T_q\mathcal{M}$ as
\begin{equation}
Q(q)=I - J(q)^{\top}\big(J(q)J(q)^{\top}\big)^{-1}J(q),
\label{eq:orth_proj}
\end{equation}
where $I$ denotes the $n\times n$ identity matrix. The projector $Q(q)$ satisfies $J(q)Q(q)=0$ and maps arbitrary vectors in $\mathbb{R}^n$ onto the tangent space $T_q\mathcal{M}$. This tangent-space formulation provides a fundamental tool for constructing constraint-consistent steering and expansion operators, which will be exploited in the proposed approach.

\subsection{Inequality Constraints and Log-Barrier Functions}
\label{sec:prelim_barrier}
As introduced in \eqref{eq:inequality_constraint}, the inequality constraints induce state-dependent boundaries that restrict admissible configurations within the equality-constrained manifold. In sampling-based planning, such constraints are typically handled via binary feasibility checks. To obtain a continuous representation of inequality constraints, we adopt a log-barrier formulation\add{~\cite{boyd2004convex}}. Specifically, given a set of $\color{red}{m_{\mathrm{ineq}}}$ inequality constraints $\{h_i(q)\}_{i=1}^{\color{red}{m_{\mathrm{ineq}}}}$, we define the logarithmic barrier function
\begin{equation}
\phi(q) = - \sum_{i=1}^{\color{red}{m_{\mathrm{ineq}}}\color{black}} \log h_i(q).
\label{eq:log_barrier}
\end{equation}

\noindent This function grows rapidly as a configuration $q$ approaches the boundary of the admissible region, thereby smoothly encoding proximity to inequality constraints. Moreover, the Hessian of the barrier function captures second-order information associated with the curvature of the constraint boundaries, providing a principled means to quantify how inequality constraints locally distort the geometry of the configuration space. This property makes log-barrier functions well-suited for defining geometry-aware structures that can be leveraged during planning, as will be described in Section \ref{section3}.

\section{Riemannian Barrier Metric RRT}
\label{section3}

In this section, we present Riemannian Barrier Metric RRT (RMRRT), a sampling-based planner that combines equality-manifold admissibility with inequality-aware local geometry. We first construct an ambient barrier metric and the induced tangent-space metric, and then show how the resulting geometry is used consistently in steering, nearest-neighbor selection, and manifold correction.

\subsection{Proposed Framework}
\label{sec:rmrrt_framework}

RMRRT builds on a bidirectional RRT-Connect backbone, but replaces purely Euclidean local expansion with a geometry that reflects both equality and inequality constraints. The motivation is illustrated in Fig.~\ref{fig2_a}--\ref{fig2_b}. In the absence of active inequality boundaries, tangent-space steering provides a natural local motion model on the equality-constrained manifold and is widely used in projection-based constrained planners \cite{jang2022motion,suh2011tangent,jaillet2012path}. However, when inequality constraints carve out infeasible regions on the same manifold, tangent-feasible extensions may repeatedly move toward nearby invalid regions, leading to frequent rejection and inefficient exploration.

\add{RMRRT addresses this issue by endowing the equality-consistent
tangent space with a nonuniform geometry induced by nearby inequality
boundaries. Whereas conventional tangent-space planning measures
admissible directions uniformly in the Euclidean sense, RMRRT stretches
the geometry along directions that approach active boundaries. The same
Euclidean displacement is therefore measured as a longer Riemannian
motion when directed toward a nearby boundary, while directions that
preserve clearance remain less distorted. This reshapes local
exploration before an extension is generated rather than relying only
on post-hoc rejection.}

\add{RMRRT first constructs an ambient barrier metric $G(q)$ from
inequality-boundary proximity and restricts it to the equality-consistent
tangent space through a $G$-orthogonal projection, yielding the induced
metric $M(q)$. The metric $M(q)$ is used for steering and
nearest-neighbor selection: steering suppresses boundary-directed motion
components, while nearest-neighbor selection evaluates local
connectability in the induced geometry rather than by Euclidean
proximity alone. Thus, a Euclidean-near node may be disfavored if
connecting to it requires motion toward an active boundary.}

\add{The ambient metric $G(q)$ additionally defines the
minimum-$G$-norm manifold correction after a finite extension step.
Consequently, steering, node selection, and correction are constructed
consistently under the same local geometry. The following derivations
formalize the well-posedness and geometric consistency of these
operations.}


\begin{algorithm}[t]
\caption{\textsc{BuildMetric}$(q)$: Riemannian barrier metric}
\label{alg:rbmetric}
\begin{algorithmic}[1]
\REQUIRE Configuration $q$
\ENSURE Ambient barrier metric $G(q)$, tangent projector $P(q)$, induced tangent-space metric $M(q)$

\FOR{$j=1,\dots,m_{\mathrm{proxy}}$}
    \STATE Compute proxy point $p_j(q)$ and signed distance $d_j(q)$ \COMMENT{\add{proxy kinematics and clearance}}
    \STATE Compute point Jacobian $J_j(q)=\partial p_j(q)/\partial q$
\ENDFOR
\STATE $G(q) \leftarrow \lambda I + \sum_{j=1}^{m_{\mathrm{proxy}}} w_j(q)\,J_j^\top(q)J_j(q)$ \COMMENT{\add{ambient barrier metric~\eqref{eq:G_GN}, weights~\eqref{eq:wj_def}}}
\STATE Compute equality Jacobian $J(q)$
\STATE $A \leftarrow J(q)\,G^{-1}(q)\,J^\top(q)$
\STATE $P(q) \leftarrow I - G^{-1}(q)J^\top(q)\,A^{-1}\,J(q)$ \COMMENT{\add{$G$-orthogonal tangent projector~\eqref{eq:proj}}}
\STATE $M(q) \leftarrow P^\top(q)\,G(q)\,P(q)$ \COMMENT{\add{induced tangent-space metric~\eqref{eq:M}}}
\RETURN $G(q),P(q),M(q)$
\end{algorithmic}
\end{algorithm}

\subsection{Riemannian Barrier Metric Construction}
\label{subsec:rbmetric}

We first construct an ambient barrier metric that captures how proximity to inequality boundaries reshapes local geometry. In principle, this metric could be formed from the exact Hessians of the inequality barrier terms:
\begin{equation}
G(q)= I + \sum_{i=1}^{m_{\mathrm{ineq}}}\alpha_i \nabla^2 \phi_i(q),
\label{eq:G_ideal}
\end{equation}
where $\phi_i$ is the log-barrier associated with the $i$-th inequality constraint and $\alpha_i>0$ is a weighting factor. This ideal form directly reflects boundary-induced curvature, assigning larger metric cost to motions that reduce constraint margins. In practice, however, exact barrier Hessians are expensive and numerically fragile in cluttered scenes, since signed-distance fields may become nonsmooth under closest-feature changes and contact switching. We therefore adopt a Gauss--Newton (GN) surrogate\add{~\cite{nocedal2006numerical}} that retains the dominant positive semidefinite contribution of the barrier curvature. For a distance-based inequality constraint with signed distance $d(q)$ and barrier $\phi(q)=\phi(d(q))$, the chain rule gives
\begin{equation}
\nabla^2 \phi(q)
=
\phi''(d)\,\nabla d(q)\nabla d(q)^\top
+
\phi'(d)\,\nabla^2 d(q).
\label{eq:gn_chain_rule}
\end{equation}
The second term depends on the curvature of the signed-distance field and may be ill-conditioned or undefined near feature switching. We therefore retain only the dominant positive semidefinite term and use the approximation
\begin{equation}
\nabla^2 \phi(q)
\approx
\phi''(d)\,\nabla d(q)\nabla d(q)^\top.
\label{eq:gn_psd_approx}
\end{equation}

\noindent Now let $p(q)\in\mathbb{R}^3$ be a geometric proxy point with Jacobian
$J_p(q)=\partial p(q)/\partial q\in\mathbb{R}^{3\times n}$, and let
$n(q)=\partial d/\partial p\in\mathbb{R}^3$ denote the local distance normal. Then
\begin{equation}
\nabla d(q)=n(q)^\top J_p(q),
\label{eq:grad_d_proxy}
\end{equation}
which yields
\begin{equation}
\nabla^2 \phi(q)
\approx
\phi''(d)\,J_p(q)^\top \big(n(q)n(q)^\top\big)J_p(q).
\label{eq:gn_proxy_hessian}
\end{equation}
Since $n(q)n(q)^\top \preceq I$, we obtain the upper bound
\begin{equation}
\nabla^2 \phi(q)
\preceq
\phi''(d)\,J_p(q)^\top J_p(q).
\label{eq:gn_outer_bound}
\end{equation}

Motivated by this bound, we replace the exact directional term with a stable positive semidefinite surrogate and absorb the barrier curvature into a nonnegative distance-dependent activation weight. Summing over $m_{\mathrm{proxy}}$ proxy points leads to the practical ambient barrier metric
\begin{equation}
G(q)= \lambda I + \sum_{j=1}^{m_{\mathrm{proxy}}} w_j(q)\,J_j(q)^\top J_j(q),
\label{eq:G_GN}
\end{equation}
where $\lambda>0$ is a regularizer, $d_j(q)$ is the signed distance associated with the $j$-th proxy point, $J_j(q)\in\mathbb{R}^{3\times n}$ is the corresponding point Jacobian, and $w_j(q)$ is a distance-based activation weight. \add{The practical metric \eqref{eq:G_GN} is instantiated from distance- and proximity-based geometric inequalities---namely robot--obstacle proximity---for which signed distances and proxy-point Jacobians provide a stable and inexpensive positive-semidefinite surrogate. Inequalities without such tractable differential structure are not embedded in the metric; instead, they are enforced separately through the state- and edge-validity checks used in the overall planner (Section~\ref{subsec:overall}). The metric thus serves as an exploration bias rather than a feasibility certificate, while hard feasibility is enforced independently by full-body state- and edge-validity checking at the interpolation resolution used in our implementation.}

\begin{definition}[Barrier Activation Weight]
\label{def:barrier_weight}
For the $j$-th proxy point with signed distance $d_j(q)$ to the nearest obstacle, define
\begin{equation}
w_j(q)=\alpha_j\,\psi\!\left(\bar d_j(q)\right)\,\mathbf{1}[d_j(q)<\delta_{\mathrm{act}}],
\label{eq:wj_def}
\end{equation}
where
\begin{equation}
\bar d_j(q)=\max\!\big(d_j(q)-\delta_{\mathrm{safe}},\,0\big),
\label{eq:dbar}
\end{equation}
$\alpha_j>0$ is a category-dependent scaling factor, $\delta_{\mathrm{safe}}\ge 0$ is a safety margin, $\delta_{\mathrm{act}}>0$ is the activation radius, and
\begin{equation}
\psi(d)=\min\!\left(\frac{1}{d^2+\epsilon_d^2},\,\psi_{\max}\right).
\label{eq:psi_def}
\end{equation}
Here, $\epsilon_d>0$ avoids singular behavior near contact, and $\psi_{\max}$ caps the metric contribution for numerical stability.
\end{definition}

\begin{remark}[Barrier-Kernel Interpretation]
\label{rem:kernel_behavior}
The kernel $\psi(d)$ is a regularized and saturated approximation of the dominant $1/d^2$ behavior of the barrier curvature. As $d\to\infty$, $\psi(d)\to 0$ and the metric approaches $\lambda I$, recovering near-isotropic exploration. As $d\to 0$, $\psi(d)$ saturates at $\psi_{\max}$, ensuring bounded metric contribution near contact. Thus, the metric smoothly transitions from near-isotropic exploration far from obstacles to anisotropic penalization near active inequality boundaries.
\end{remark}
\vspace{-0.3cm}
\add{\begin{remark}[Non-distance Differentiable Inequality Margins]
\label{remark2}
Although the metric evaluated in this work is constructed from signed distances and proxy-point Jacobians, RMRRT is not structurally restricted to this form. Any differentiable inequality margin $h(q)>0$ that yields a symmetric positive-semidefinite metric contribution can be incorporated into the ambient metric $G(q)$. For example, a manipulability margin may be defined as
\begin{equation*}
h_{\mathrm{man}}(q)=\log\det\!\big(J_r(q)J_r(q)^\top+\epsilon I\big)-s_{\min},
\end{equation*}
where $J_r(q)$ is a task or constraint Jacobian, $\epsilon>0$ is a damping constant, and $s_{\min}$ is a prescribed lower bound. The corresponding contribution to the ambient metric can be written as
\begin{equation*}
G_{\mathrm{man}}(q)=w_{\mathrm{man}}(q)\,\nabla h_{\mathrm{man}}(q)\,\nabla h_{\mathrm{man}}(q)^\top,
\end{equation*}
where $w_{\mathrm{man}}(q)$ uses the same regularized and saturated activation pattern as the distance-based weights. Appendix~\ref{app:manip} provides a qualitative proof-of-concept using this term in a near-singular reaching task. This example illustrates framework-level extensibility, while the main empirical claims remain restricted to signed-distance-based geometric proxy inequalities.
\end{remark}}

\vspace{-0.3cm}
\add{\begin{remark}[Proxy Points Are Not Collision Detectors]
\label{rem:proxy_not_detector}
The proxy points and their clearances $d_j(q)$ define only where the barrier metric senses inequality-induced local geometry; they do not certify collision-freeness and need not cover the free space $C_{\mathrm{free}}$. Hard admissibility is enforced separately by \textsc{StateValid} and \textsc{EdgeValid}, which apply the full-resolution collision detector to all robot links, so every infeasible state or edge---including those in cluttered, non-convex, multi-cuboid scenes---is rejected regardless of proxy placement. Because each $d_j(q)$ is recomputed per query as the minimum distance to all obstacles and the metric uses only this local value and its gradient direction, no globally accurate or convex distance field is required. Consequently the soundness of RMRRT is independent of proxy density and distribution, which affect only the informativeness of the bias and hence exploration efficiency. The proxy representation can therefore be replaced or refined---dense surface sampling, analytic SDFs, robot SDFs, or learned SDF/RDF models---without changing the pipeline or its guarantees.
\end{remark}}

To make this geometry compatible with equality constraints, we restrict the ambient barrier metric to the tangent space of the equality-constrained manifold $\mathcal{M}$. Recall that
\begin{equation}
T_q\mathcal{M}
=
\left\{
v\in\mathbb{R}^n
\mid
J(q)v=0
\right\}.
\label{eq:tangent_repeat}
\end{equation}

\noindent Given any ambient vector $w\in\mathbb{R}^n$, its $G$-orthogonal projection onto $T_q\mathcal{M}$ is defined by
\begin{equation}
\min_{v\in\mathbb{R}^n}
\frac{1}{2}(v-w)^\top G(q)(v-w)
\quad
\text{s.t.}
\quad
J(q)v=0.
\label{eq:proj_problem}
\end{equation}

\noindent The corresponding KKT conditions\add{~\cite{boyd2004convex}} are
\begin{equation}
G(q)(v-w)+J(q)^\top \nu=0,
\qquad
J(q)v=0.
\label{eq:proj_kkt}
\end{equation}
\add{From the stationarity condition, we obtain
\begin{equation}
G(q)v
=
G(q)w-J(q)^\top\nu,
\end{equation}
and hence
\begin{equation}
v
=
w-G(q)^{-1}J(q)^\top\nu.
\end{equation}
Enforcing the feasibility condition $J(q)v=0$ gives
\begin{equation}
J(q)w
-
J(q)G(q)^{-1}J(q)^\top\nu
=
0,
\end{equation}
or equivalently,
\begin{equation}
J(q)w
=
J(q)G(q)^{-1}J(q)^\top\nu.
\end{equation}
Therefore, the Lagrange multiplier is given by
\begin{equation}
\nu
=
\left(
J(q)G(q)^{-1}J(q)^\top
\right)^{-1}
J(q)w.
\end{equation}
Since $G(q)\succ0$ and $J(q)$ has full row rank,
$J(q)G(q)^{-1}J(q)^\top$ is invertible. Substituting this expression back into $v$ yields
\begin{equation}
v
=
w
-
G(q)^{-1}J(q)^\top
\left(
J(q)G(q)^{-1}J(q)^\top
\right)^{-1}
J(q)w.
\end{equation}
Factoring out $w$, we obtain
\begin{equation}
v
=
\left(
I
-
G(q)^{-1}J(q)^\top
\left(
J(q)G(q)^{-1}J(q)^\top
\right)^{-1}
J(q)
\right)w.
\end{equation}}
Defining the linear operator in parentheses as $P(q)$, we have
\begin{equation}
v=P(q)w,
\label{eq:proj_vec}
\end{equation}
with
\begin{equation}
P(q)
=
I
-
G(q)^{-1}J(q)^\top
\left(
J(q)G(q)^{-1}J(q)^\top
\right)^{-1}
J(q).
\label{eq:proj}
\end{equation}

\noindent The induced tangent-space metric is then defined as
\begin{equation}
M(q)
=
P(q)^\top G(q)P(q).
\label{eq:M}
\end{equation}
This induced tangent-space metric is used in all subsequent planner operations.
\begin{figure*}[!t]
\centering
\subfloat[]{\includegraphics[width=3.5in]{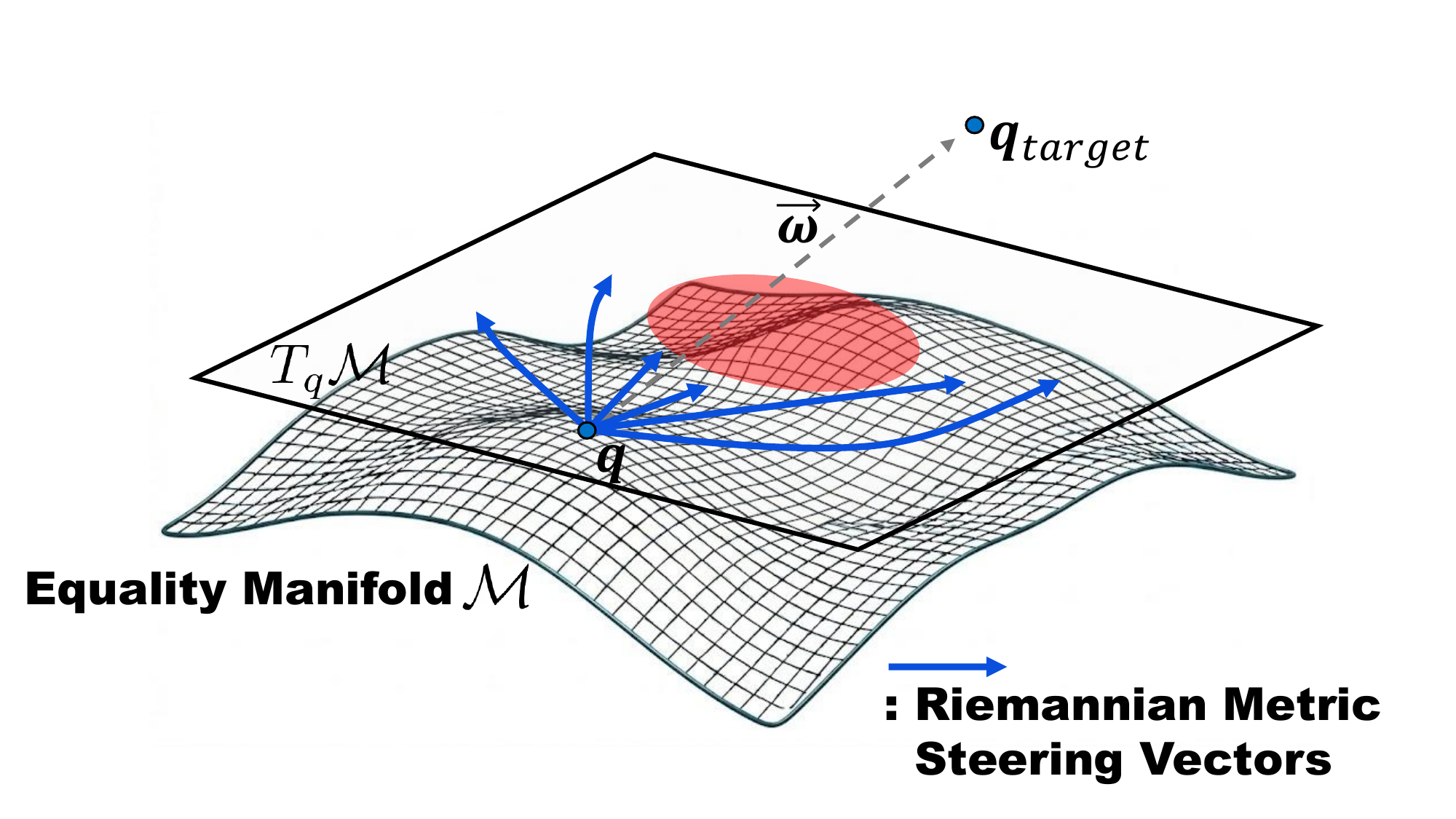}%
\label{fig3_a_case}}
\hfil
\subfloat[]{\includegraphics[width=3.5in]{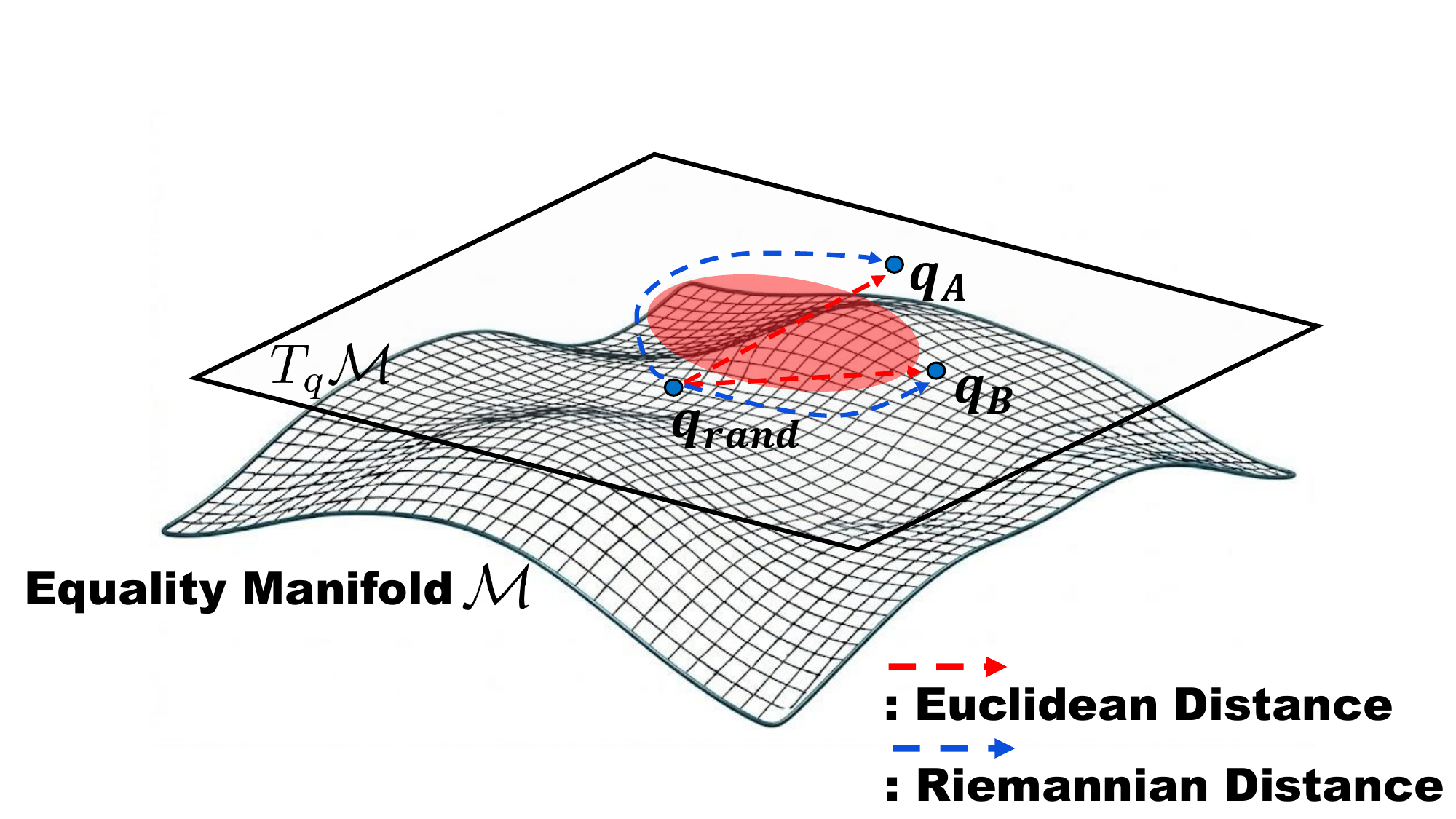}%
\label{fig3_b_case}}
\caption{Conceptual illustration of the metric-aware exploration mechanism in RMRRT.
(a) Metric-induced tangent-space steering under the induced tangent-space metric.
(b) Metric-aware nearest-neighbor selection under local inequality-dependent anisotropy.}
\label{fig_manifold}
\end{figure*}
\begin{remark}[Regularity and Singular Configurations]
This construction assumes $J(q)$ has full row rank, so that the Schur complement $S(q)=J(q)G(q)^{-1}J(q)^\top$ is nonsingular and the projector~\eqref{eq:proj}, the steering direction, and the minimum-$G$-norm correction are well defined. At a singular configuration, $J(q)$ loses rank and $S(q)$ becomes singular, so the projector~\eqref{eq:proj} and correction~\eqref{eq:riemannian_newton_projection} are no longer uniquely defined; a small $\sigma_{\min}(J(q))$ also makes these solves ill-conditioned. Proposition~\ref{prop:probabilistic_completeness} therefore holds only on regular regions bounded away from rank deficiency, and RMRRT does not guarantee traversal of kinematic singularities---a regularity assumption shared by projection-, atlas-, and tangent-space-based constrained planners.
\end{remark}

\begin{lemma}[Tangent-Space Identity of the \(G\)-Orthogonal Projector]
\label{lem:tangent_identity}
Let $P(q)$ be defined as in \eqref{eq:proj} and suppose $J(q)$ has full row rank. Then, for any $v\in T_q\mathcal{M}$,
\begin{equation}
P(q)v=v.
\label{eq:tangent_identity}
\end{equation}
\end{lemma}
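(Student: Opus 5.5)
The argument is a direct computation from the explicit formula \eqref{eq:proj}. We take $v\in T_q\mathcal{M}$, so that $J(q)v=0$ by \eqref{eq:tangent_repeat}, and expand $P(q)v$ as
\begin{equation*}
P(q)v = v - G(q)^{-1}J(q)^\top\big(J(q)G(q)^{-1}J(q)^\top\big)^{-1}J(q)v .
\end{equation*}
The correction term ends in $J(q)v=0$, so it vanishes and $P(q)v=v$.

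The only point that needs care is that the correction term is well defined, so that the cancellation makes sense. Here we use what the paper has already established. First, $G(q)=\lambda I+\sum_j w_j(q)J_j(q)^\top J_j(q)$ with $\lambda>0$ and $w_j\ge 0$, so $G(q)\succ 0$ and $G(q)^{-1}$ exists and is symmetric positive definite. Second, since $J(q)$ has full row rank, for any $y\neq 0$ we have $J(q)^\top y\neq 0$. Hence $y^\top J(q)G(q)^{-1}J(q)^\top y>0$, so the Schur complement $J(q)G(q)^{-1}J(q)^\top$ is positive definite and invertible, as already noted in the derivation preceding \eqref{eq:proj}.

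As a sanity check, we can also read the claim from the variational definition \eqref{eq:proj_problem}. When $w=v$ is feasible, the point $v$ itself attains objective value $0$. Strict convexity of the objective, from $G(q)\succ 0$, makes this minimizer unique, so the minimizer is $v$. This agrees with $P(q)v=v$ via \eqref{eq:proj_vec}.

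No step is a real obstacle. The only subtlety is invoking positive definiteness of $G(q)$ and full row rank of $J(q)$ so that the inverse in \eqref{eq:proj} exists.
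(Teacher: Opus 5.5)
Your proof is correct and matches the paper's: expand $P(q)v$ using \eqref{eq:proj} and note that the correction term ends in $J(q)v=0$. Your extra checks are sound but go beyond what the paper writes out, since the paper takes well-definedness of $P(q)$ from the preceding derivation. These checks are $G(q)\succeq\lambda I\succ 0$, invertibility of the Schur complement, and consistency with the variational definition \eqref{eq:proj_problem}.
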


\begin{proof}
Expanding \eqref{eq:proj} gives
\begin{equation}
P(q)v
=
v-
G^{-1}(q)J(q)^\top
\big(J(q)G^{-1}(q)J(q)^\top\big)^{-1}J(q)v.
\label{eq:Pv_expand}
\end{equation}
Since $v\in \mathrm{Null}(J(q))$, we have $J(q)v=0$, and therefore $P(q)v=v$.
\end{proof}

\begin{lemma}[Metric Equivalence on the Tangent Space]
\label{lem:metric_equivalence}
For any tangent-projected displacement
$\tilde r=P(q)(q_{\mathrm{rand}}-q)\in T_q\mathcal{M}$,
the quadratic form induced by $M(q)$ satisfies
\begin{equation}
\tilde r^\top M(q)\tilde r=\tilde r^\top G(q)\tilde r.
\label{eq:metric_equivalence}
\end{equation}
\end{lemma}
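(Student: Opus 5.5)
The plan is to reduce the claim to Lemma~\ref{lem:tangent_identity}. First I will check that $\tilde r$ really lies in $T_q\mathcal{M}$: this is not an extra hypothesis but a consequence of the construction of $P(q)$. Multiplying \eqref{eq:proj} on the left by $J(q)$ gives
\begin{equation*}
J(q)P(q)=J(q)-J(q)G(q)^{-1}J(q)^\top\big(J(q)G(q)^{-1}J(q)^\top\big)^{-1}J(q)=0,
\end{equation*}
where the product with the inverse of the Schur complement $S(q)=J(q)G(q)^{-1}J(q)^\top$ cancels. This cancellation uses that $S(q)$ is invertible, which follows from $G(q)\succ 0$ (guaranteed by $\lambda>0$ in \eqref{eq:G_GN}) together with the full-row-rank assumption on $J(q)$. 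Hence $J(q)\tilde r=0$, that is, $\tilde r\in\mathrm{Null}(J(q))=T_q\mathcal{M}$. Equivalently, one can cite the feasibility condition $J(q)v=0$ in the KKT system \eqref{eq:proj_kkt}, of which $v=P(q)w$ is the solution.

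Next, I will expand the quadratic form using the definition \eqref{eq:M}:
\begin{equation*}
\tilde r^\top M(q)\tilde r=\tilde r^\top P(q)^\top G(q)P(q)\tilde r=\big(P(q)\tilde r\big)^\top G(q)\big(P(q)\tilde r\big).
\end{equation*}
Since $\tilde r\in T_q\mathcal{M}$, Lemma~\ref{lem:tangent_identity} gives $P(q)\tilde r=\tilde r$. Substituting this into both factors yields $\tilde r^\top G(q)\tilde r$, which is \eqref{eq:metric_equivalence}. Equivalently, $P(q)$ is idempotent, so $P(q)\tilde r=P(q)^2(q_{\mathrm{rand}}-q)=\tilde r$.

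There is no real obstacle here. The only point needing care is the first step: Lemma~\ref{lem:tangent_identity} presupposes tangency, so the range inclusion $\mathrm{Range}(P(q))\subseteq T_q\mathcal{M}$ must be established explicitly. This in turn depends on the well-posedness of $S(q)^{-1}$, so I will state the standing regularity assumption (full row rank of $J(q)$, as in the preceding remark) at the outset.
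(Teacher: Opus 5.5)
Your proof is correct and matches the paper's argument: expand $M(q)=P(q)^\top G(q)P(q)$ and use Lemma~\ref{lem:tangent_identity} to replace $P(q)\tilde r$ by $\tilde r$ in both factors. Your extra check that $J(q)P(q)=0$ is something the paper does not prove; it simply builds tangency into the hypothesis $\tilde r\in T_q\mathcal{M}$, so deriving it from the construction of $P(q)$ is a sound and worthwhile addition.
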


\begin{proof}
Since $M(q)=P(q)^\top G(q)P(q)$ and $\tilde r\in T_q\mathcal{M}$, Lemma~\ref{lem:tangent_identity} implies $P(q)\tilde r=\tilde r$. Therefore,
\[
\tilde r^\top M(q)\tilde r
=
\tilde r^\top P(q)^\top G(q)P(q)\tilde r
=
\tilde r^\top G(q)\tilde r.
\]
\end{proof}

\begin{remark}[Metric Evaluation after Projection]
\label{rem:metric_eval}
Lemma~\ref{lem:metric_equivalence} shows that once a displacement has been projected into $T_q\mathcal{M}$, metric-aware distance evaluation can be carried out using $G(q)$, without explicitly forming $P(q)^\top G(q)P(q)$ for each candidate.
\end{remark}

\begin{proposition}[Adaptive Metric Distortion Bound]
\label{prop:metric_distortion}
For any feasible configuration $q\in\mathcal{F}$ with clearance
\begin{equation}
\delta_{\min}=\min_j d_j(q)>0,
\label{eq:delta_min}
\end{equation}
the condition number of the induced tangent-space metric restricted to $T_q\mathcal{M}$ satisfies
\begin{equation}
\kappa\!\left(M(q)\big|_{T_q\mathcal{M}}\right)
\le
1+\frac{m_{\mathrm{proxy}}\,\alpha_{\max}\,\psi(\bar{\delta}_{\min})\,C_J}{\lambda},
\label{eq:kappa_bound}
\end{equation}
where
\begin{equation}
C_J=\max_j \|J_j(q)^\top J_j(q)\|_2,
\qquad
\alpha_{\max}=\max_j \alpha_j,
\label{eq:CJ_alphamax}
\end{equation}
and
\begin{equation}
\bar{\delta}_{\min}
=
\max\!\big(\delta_{\min}-\delta_{\mathrm{safe}},\,0\big).
\label{eq:deltabar_min}
\end{equation}
\end{proposition}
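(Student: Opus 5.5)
By Lemma~\ref{lem:metric_equivalence}, the restriction of $M(q)$ to $T_q\mathcal{M}$ coincides with the restriction of $G(q)$. For every $v\in T_q\mathcal{M}$ we have $v^\top M(q)v=v^\top G(q)v$, and the lemma's argument uses only $P(q)v=v$ (Lemma~\ref{lem:tangent_identity}), so it holds for all tangent vectors, not just projected displacements. The plan is therefore to bound the extreme Rayleigh quotients of $G(q)$ over the subspace $T_q\mathcal{M}$, namely
\[
\lambda_{\min}\big(M|_{T_q\mathcal{M}}\big)=\min_{v\in T_q\mathcal{M},\,\|v\|=1}v^\top G(q)v,
\]
and the analogous maximum. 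The condition number is then their ratio. Here $M|_{T_q\mathcal{M}}$ is understood as the quadratic form on the subspace, for example represented in an orthonormal basis $B$ of $\mathrm{Null}(J(q))$ as $B^\top G B$.

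\textbf{Lower bound.} Each term $w_j J_j^\top J_j$ is positive semidefinite because $w_j\ge 0$. Hence $v^\top G v\ge \lambda\|v\|^2$, so the smallest Rayleigh quotient on the subspace is at least $\lambda$.

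\textbf{Upper bound.} Apply the triangle inequality to get
\[
v^\top G v\le \lambda\|v\|^2+\sum_j w_j(q)\,\|J_j^\top J_j\|_2\,\|v\|^2 .
\]
Then bound each weight: $w_j\le \alpha_j\psi(\bar d_j)\le \alpha_{\max}\psi(\bar d_j)$, since the indicator is at most $1$. The key monotonicity step is that $\psi$ is nonincreasing in $d\ge 0$, because both $1/(d^2+\epsilon_d^2)$ and its cap by $\psi_{\max}$ are nonincreasing. Likewise $d\mapsto\max(d-\delta_{\mathrm{safe}},0)$ is nondecreasing. Since $d_j\ge\delta_{\min}$, it follows that $\bar d_j\ge\bar\delta_{\min}$ and hence $\psi(\bar d_j)\le\psi(\bar\delta_{\min})$. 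Summing the $m_{\mathrm{proxy}}$ terms, each bounded using $C_J$, gives
\[
\lambda_{\max}\le \lambda+m_{\mathrm{proxy}}\alpha_{\max}\psi(\bar\delta_{\min})C_J .
\]
Dividing by the lower bound $\lambda$ yields \eqref{eq:kappa_bound}.

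\textbf{Main obstacle.} The delicate point is interpretive rather than computational. $M(q)$ as an $n\times n$ matrix is singular, because $P(q)$ annihilates $\mathrm{range}(G^{-1}J^\top)$. So the condition number must be read as that of the form restricted to $T_q\mathcal{M}$, i.e.\ the ratio of the max and min Rayleigh quotients over unit tangent vectors (equivalently $\kappa(B^\top G B)$). I would state this convention explicitly and invoke Lemma~\ref{lem:metric_equivalence} to justify replacing $M$ by $G$ there. One should also note that restricting to a subspace can only shrink the spectral range, by Cauchy interlacing. Consequently the ambient bounds $\lambda$ and $\lambda+\sum_j w_j\|J_j^\top J_j\|_2$ transfer directly, with no dependence on $J(q)$ beyond its full row rank.
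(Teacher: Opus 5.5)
Your proposal is correct and follows the same route as the paper. You reduce $M(q)$ to $G(q)$ on unit tangent vectors via Lemmas~\ref{lem:tangent_identity}--\ref{lem:metric_equivalence}, get the lower Rayleigh bound $\lambda$ from the positive semidefinite proxy terms, bound each proxy term by $\alpha_{\max}\psi(\bar\delta_{\min})C_J$ using monotonicity of $\psi$, and take the ratio. You are also slightly more careful than the paper on two small points: you state explicitly that $\kappa$ refers to the form restricted to $T_q\mathcal{M}$ (since $M(q)$ itself is singular on $\mathrm{range}(G^{-1}J^\top)$), and you route the monotonicity through $\bar d_j\ge\bar\delta_{\min}$, where the paper writes $d_j(q)\ge\bar\delta_{\min}$.
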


\begin{proof}
Let $v\in T_q\mathcal{M}$ with $\|v\|=1$. By Lemma~\ref{lem:tangent_identity},
$P(q)v=v$, and by Lemma~\ref{lem:metric_equivalence},
$v^\top M(q)v=v^\top G(q)v$. Using \eqref{eq:G_GN} and Definition~\ref{def:barrier_weight},
\begin{equation}
v^\top G(q)v
=
\lambda
+
\sum_{j=1}^{m_{\mathrm{proxy}}}
w_j(q)\|J_j(q)v\|^2.
\label{eq:metric_quadratic_form}
\end{equation}
\add{Each summand is nonnegative, which gives the lower bound $v^\top G(q)v\ge\lambda$. For the upper bound, $d_j(q)\ge\bar{\delta}_{\min}$ for all active proxy points and $\psi$ is non-increasing; hence,
\begin{align}
w_j(q)
&\le
\alpha_{\max}\psi(\bar{\delta}_{\min}),\\
\|J_j(q)v\|^2
&=
v^\top J_j(q)^\top J_j(q)v\\
&\le
\|J_j(q)^\top J_j(q)\|_2
\le
C_J.
\end{align}
Therefore, each proxy contribution is bounded by
$\alpha_{\max}\psi(\bar{\delta}_{\min})C_J$. Summing over the
$m_{\mathrm{proxy}}$ terms gives
\begin{equation}
\lambda
\le
v^\top G(q)v
\le
\lambda
+
m_{\mathrm{proxy}}
\alpha_{\max}
\psi(\bar{\delta}_{\min})
C_J.
\label{eq:metric_quadratic_bound}
\end{equation}
The claimed condition-number bound follows by taking the ratio of the
upper and lower Rayleigh-quotient bounds.}
\end{proof}


The construction above yields an induced tangent-space metric $M(q)$ that preserves admissible first-order motions while encoding proximity to active inequality boundaries. In particular, Proposition~\ref{prop:metric_distortion} shows that the resulting metric distortion remains controlled by obstacle clearance and regularization. The next two subsections use this same metric for local steering and nearest-neighbor selection.

\begin{algorithm}[t]
\caption{\textsc{MetricSteer}$(q, q_{\mathrm{target}}, \eta)$: metric-induced tangent-space steering}
\label{alg:metricsteer}
\begin{algorithmic}[1]
\REQUIRE Feasible node $q\in\mathcal{M}$, target sample $q_{\mathrm{target}}$, step size $\eta>0$
\ENSURE New configuration $q_{\mathrm{new}}$

\STATE $w \leftarrow q_{\mathrm{target}} - q$
\STATE $(G,P,M) \leftarrow \textsc{BuildMetric}(q)$
\STATE $\tilde w \leftarrow P\,w$ \COMMENT{\add{tangent projection~\eqref{eq:steer_w_tilde}}}
\IF{$\|\tilde w\| < \epsilon$}
    \RETURN $q$
\ENDIF
\STATE Solve $M\,v_{\mathrm{tmp}}=\tilde w$ \COMMENT{\add{closed-form steering direction~\eqref{eq:steer_closedform}}}
\STATE $v_0 \leftarrow P\,v_{\mathrm{tmp}}$ \COMMENT{\add{re-projection~\eqref{eq:steer_v0}}}
\STATE $s \leftarrow \sqrt{v_0^\top M\,v_0}$
\IF{$s < \epsilon$}
  \RETURN $q$
\ENDIF
\STATE $\hat v \leftarrow v_0 / s$
\STATE $q_{\mathrm{new}} \leftarrow q + \eta\,\hat v$ \COMMENT{\add{metric normalization and fixed step~\eqref{eq:steer_step}}}
\RETURN $q_{\mathrm{new}}$
\end{algorithmic}
\end{algorithm}

\subsection{Metric-Induced Tangent-Space Steering}

Given the induced tangent-space metric $M(q)$, we define a local steering rule that selects a tangent direction making first-order progress toward the target while minimizing local metric energy. As illustrated in Fig.~\ref{fig3_a_case}, the resulting direction remains equality-consistent to first order while being biased away from nearby active inequality boundaries. \add{For a tree node $q\in\mathcal{M}$, let $q_{\mathrm{target}}\in\mathcal{Q}\subset\mathbb{R}^n$ denote a target sample drawn from the ambient configuration space. The sample is used only to define the ambient displacement}
\begin{equation}
w = q_{\mathrm{target}} - q.
\label{eq:steer_w}
\end{equation}
Its tangent-space component is
\begin{equation}
\tilde w = P(q)w \in T_q\mathcal{M}. 
\label{eq:steer_w_tilde}
\end{equation}
\add{Thus, $q_{\mathrm{target}}$ itself need not satisfy the equality constraints; only the tangent component of the ambient displacement is used for steering. The finite-step state generated from this direction is subsequently corrected back onto $\mathcal{M}$.}

\begin{proposition}[Well-Posedness of Metric-Induced Steering]
\label{prop:steer_wellposed}
Assume that $J(q)$ has full row rank and that $G(q)\succ 0$. Then the projector $P(q)$ is well-defined, the induced tangent-space metric $M(q)$ is positive definite on $T_q\mathcal{M}$, and for any nonzero $\tilde w \in T_q\mathcal{M}$, the optimization problem
\begin{equation}
v^\star = \arg\min_{v \in T_q\mathcal{M}} \frac{1}{2} v^\top M(q)v
\quad \text{s.t.} \quad v^\top \tilde w = 1
\label{eq:steer_opt}
\end{equation}
admits a unique minimizer.
\end{proposition}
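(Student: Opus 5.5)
The proof has three parts: well-definedness of $P(q)$, positive definiteness of $M(q)$ on $T_q\mathcal{M}$, and existence and uniqueness of the constrained minimizer.

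\emph{Well-definedness of $P(q)$.} Since $G(q)\succ 0$, $G(q)^{-1}$ exists and is symmetric positive definite. For any $y\in\mathbb{R}^{m_{\mathrm{eq}}}$,
\[
y^\top J(q)G(q)^{-1}J(q)^\top y=(J(q)^\top y)^\top G(q)^{-1}(J(q)^\top y).
\]
This is strictly positive unless $J(q)^\top y=0$, and full row rank of $J(q)$ forces $y=0$ in that case. Hence the Schur complement $S(q)=J(q)G(q)^{-1}J(q)^\top$ is positive definite, hence invertible, and \eqref{eq:proj} is well defined.

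\emph{Positive definiteness of $M(q)$ on $T_q\mathcal{M}$.} Take $v\in T_q\mathcal{M}$ with $v\neq 0$. Lemma~\ref{lem:tangent_identity} gives $P(q)v=v$, and Lemma~\ref{lem:metric_equivalence} (whose proof uses only $v\in T_q\mathcal{M}$) gives
\[
v^\top M(q)v=v^\top G(q)v>0.
\]
Alternatively, \eqref{eq:metric_quadratic_bound} gives the explicit bound $v^\top M(q)v\ge\lambda\|v\|^2$.

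\emph{Existence and uniqueness of the minimizer.} Let $B\in\mathbb{R}^{n\times k}$, $k=n-m_{\mathrm{eq}}$, be a basis of $\mathrm{Null}(J(q))$, and write $v=Bz$. Problem \eqref{eq:steer_opt} becomes
\[
\min_{z\in\mathbb{R}^{k}}\ \tfrac12 z^\top \hat M z \quad\text{s.t.}\quad z^\top b=1,
\qquad \hat M=B^\top M(q)B\succ 0,\quad b=B^\top\tilde w.
\]
Since $\tilde w\in T_q\mathcal{M}$ is nonzero, $\tilde w=Bc$ for some $c\neq 0$, and then $c^\top b=\tilde w^\top\tilde w>0$. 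So $b\neq 0$ and the feasible set is a nonempty affine hyperplane. The objective is strictly convex on $\mathbb{R}^k$ and coercive, so a minimizer exists and is unique. Explicitly, the KKT conditions give
\[
z^\star=\frac{\hat M^{-1}b}{b^\top\hat M^{-1}b},
\]
with $b^\top\hat M^{-1}b>0$. Mapping back by $v^\star=Bz^\star$ preserves uniqueness because $B$ is injective.

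The main obstacle is a technical subtlety. $M(q)=P^\top GP$ is singular on all of $\mathbb{R}^n$, since $P$ annihilates $\mathrm{range}(G^{-1}J^\top)$. Uniqueness therefore genuinely depends on restricting to $T_q\mathcal{M}$, which the basis reduction handles. I would add a remark that this is why Algorithm~\ref{alg:metricsteer} re-projects the solution of $Mv_{\mathrm{tmp}}=\tilde w$ through $P$. In an implementation, that solve should be read as the tangent-restricted (or pseudo-inverse) solve; the unique minimizer is $v^\star$ above, up to the normalization used in the algorithm.
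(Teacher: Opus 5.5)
Your proof is correct and follows the paper's argument. Positive definiteness of $J(q)G(q)^{-1}J(q)^\top$ gives well-definedness of $P(q)$; Lemma~\ref{lem:tangent_identity} gives $v^\top M(q)v=v^\top G(q)v>0$ on $T_q\mathcal{M}$; and strict convexity over the nonempty set $\{v\in T_q\mathcal{M}: v^\top\tilde w=1\}$ gives uniqueness. Your null-space basis reduction just makes explicit the existence (coercivity) step that the paper leaves implicit.

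One caveat, affecting only your closing remark: the tangent-restricted solve does yield $v^\star\propto P(q)G(q)^{-1}\tilde w$, but a Moore--Penrose solve followed by re-projection, $P(q)M(q)^{+}\tilde w$, is in general not parallel to $v^\star$ when $\dim T_q\mathcal{M}\ge 2$, so you should drop the ``or pseudo-inverse''.
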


\begin{proof}
Since $G(q)\succ 0$, the Schur complement
\begin{equation}
S(q)=J(q)G^{-1}(q)J(q)^\top
\label{eq:steer_schur}
\end{equation}
is positive definite under the full-row-rank assumption on $J(q)$, so $P(q)$ is well-defined. Moreover, for any nonzero $v \in T_q\mathcal{M}$, Lemma~\ref{lem:tangent_identity} implies $P(q)v=v$, hence
\[
v^\top M(q)v = v^\top G(q)v > 0.
\]
Therefore $M(q)$ is positive definite on $T_q\mathcal{M}$. The objective in \eqref{eq:steer_opt} is strictly convex on the tangent space, and the affine constraint is nonempty whenever $\tilde w \neq 0$, so the minimizer is unique.
\end{proof}

Introducing a scalar Lagrange multiplier $\nu$, define
\begin{equation}
L(v,\nu)=\frac{1}{2}v^\top M(q)v + \nu(1-v^\top \tilde w).
\label{eq:steer_lagrangian}
\end{equation}
The first-order optimality conditions yield
\begin{equation}
v^\star=\frac{M^{-1}(q)\tilde w}{\tilde w^\top M^{-1}(q)\tilde w}.
\label{eq:steer_closedform}
\end{equation}
Equation~\eqref{eq:steer_closedform} shows that the projected displacement is reweighted by the inverse induced tangent-space metric, amplifying inexpensive directions while suppressing directions toward nearby active boundaries.

For numerical robustness, we optionally reapply the tangent projector and define
\begin{equation}
v_0 = P(q)M^{-1}(q)\tilde w.
\label{eq:steer_v0}
\end{equation}
We then normalize this direction with respect to $M(q)$ and take a fixed-length step:
\begin{equation}
\hat v = \frac{v_0}{\sqrt{v_0^\top M(q)v_0}},
\qquad
q_{\mathrm{new}} = q + \eta \hat v,
\label{eq:steer_step}
\end{equation}
where $\eta>0$ is the step size. The complete procedure is summarized in Algorithm~\ref{alg:metricsteer}.

\begin{algorithm}[t]
\caption{\textsc{MetricNearest}$(q_{\mathrm{rand}}, \mathcal{V}, K)$: metric-aware nearest-neighbor selection}
\label{alg:metricnearest}
\begin{algorithmic}[1]
\REQUIRE Sample $q_{\mathrm{rand}}$, tree nodes $\mathcal{V}\subset\mathcal{M}$, candidate size $K$
\ENSURE Nearest node $q_{\mathrm{near}}$

\STATE $\mathcal{V}_K \leftarrow \textsc{EuclideanKNN}(q_{\mathrm{rand}}, \mathcal{V}, K)$ \COMMENT{\add{Euclidean pre-selection, adaptive $K$~\eqref{eq:K_adaptive}}}
\STATE $q_{\mathrm{near}} \leftarrow \emptyset,\; d_{\min} \leftarrow +\infty$
\FORALL{$q \in \mathcal{V}_K$}
    \STATE $(G,P,M) \leftarrow \textsc{BuildMetric}(q)$ \COMMENT{\add{local metric (Alg.~\ref{alg:rbmetric})}}
    \STATE $r \leftarrow q_{\mathrm{rand}} - q$
    \STATE $\tilde r \leftarrow P\, r$
    \STATE $d \leftarrow \sqrt{\tilde r^\top M\, \tilde r}$ \COMMENT{\add{induced metric distance~\eqref{eq:metric_distance}}}
    \IF{$d < d_{\min}$}
        \STATE $d_{\min} \leftarrow d,\;\; q_{\mathrm{near}} \leftarrow q$ \COMMENT{\add{metric reranking~\eqref{eq:nearest_metric_subset}}}
    \ENDIF
\ENDFOR
\RETURN $q_{\mathrm{near}}$
\end{algorithmic}
\end{algorithm}

\begin{algorithm}[t]
\caption{Overall RMRRT Framework}
\label{alg:rmrrt}
\begin{algorithmic}[1]
\REQUIRE Feasible start/goal configurations $q_{\mathrm{start}}, q_{\mathrm{goal}} \in \mathcal{M}$, step size $\eta$, candidate size $K$, maximum iterations $N_{\max}$
\ENSURE Feasible path $\sigma$ or \textsc{Failure}

\STATE Initialize two trees $\mathcal{T}_a \leftarrow \{q_{\mathrm{start}}\}$, $\mathcal{T}_b \leftarrow \{q_{\mathrm{goal}}\}$
\FOR{$n = 1$ to $N_{\max}$}
\STATE $q_{\mathrm{rand}} \sim \textsc{UniformSample}(\mathcal{Q})$
\STATE $q_{\mathrm{near}} \leftarrow \textsc{MetricNearest}(q_{\mathrm{rand}}, \mathcal{V}(\mathcal{T}_a), K)$ \COMMENT{\add{metric-aware nearest neighbor (Alg.~\ref{alg:metricnearest},~\eqref{eq:nearest_metric_subset})}}
\STATE $q_{\mathrm{new}} \leftarrow \textsc{MetricSteer}(q_{\mathrm{near}}, q_{\mathrm{rand}}, \eta)$ \COMMENT{\add{metric-induced steering (Alg.~\ref{alg:metricsteer})}}
\STATE $isStateValid \leftarrow \textsc{StateValid}(q_{\mathrm{new}})$ \COMMENT{\add{hard feasibility enforced here (state/edge checks)}}
\STATE $isEdgeValid \leftarrow \textsc{EdgeValid}(q_{\mathrm{near}}, q_{\mathrm{new}})$
\IF{\textbf{not} $isStateValid$ \OR \textbf{not} $isEdgeValid$}
    \STATE \textbf{continue}
\ENDIF
\STATE Add $q_{\mathrm{new}}$ to $\mathcal{T}_a$ with parent $q_{\mathrm{near}}$
\STATE $q_{\mathrm{curr}} \leftarrow \textsc{MetricNearest}(q_{\mathrm{new}}, \mathcal{V}(\mathcal{T}_b), K)$ \COMMENT{\add{connect opposite tree (bidirectional backbone)}}
\REPEAT
    \STATE $q_{\mathrm{next}} \leftarrow \textsc{MetricSteer}(q_{\mathrm{curr}}, q_{\mathrm{new}}, \eta)$
    \STATE $isStateValid \leftarrow \textsc{StateValid}(q_{\mathrm{next}})$
    \STATE $isEdgeValid \leftarrow \textsc{EdgeValid}(q_{\mathrm{curr}}, q_{\mathrm{next}})$
    \IF{\textbf{not} $isStateValid$ \OR \textbf{not} $isEdgeValid$}
        \STATE \textbf{break}
    \ENDIF
    \STATE Add $q_{\mathrm{next}}$ to $\mathcal{T}_b$ with parent $q_{\mathrm{curr}}$
    \STATE $q_{\mathrm{curr}} \leftarrow q_{\mathrm{next}}$
\UNTIL{$\|q_{\mathrm{curr}} - q_{\mathrm{new}}\| < \epsilon_{\mathrm{conn}}$}
\IF{$\|q_{\mathrm{curr}} - q_{\mathrm{new}}\| < \epsilon_{\mathrm{conn}}$}
    \STATE $\sigma \leftarrow \textsc{ExtractPath}(\mathcal{T}_a, q_{\mathrm{new}}, \mathcal{T}_b, q_{\mathrm{curr}})$
    \RETURN $\sigma$
\ENDIF
\STATE Swap $(\mathcal{T}_a, \mathcal{T}_b)$ \COMMENT{\add{swap active tree}}
\ENDFOR
\RETURN \textsc{Failure}
\end{algorithmic}
\end{algorithm}
\subsection{Metric-Aware Nearest-Neighbor Selection}
\label{subsec:nearest}

We next use the same induced tangent-space metric $M(q)$ to define a geometry-consistent nearest-neighbor rule. Rather than ranking tree nodes solely by ambient Euclidean proximity, the planner evaluates the local cost of the tangent-projected displacement under $M(q)$. As illustrated in Fig.~\ref{fig3_b_case}, this anisotropic geometry favors nodes that are locally well-connected under the induced metric rather than those that are merely Euclidean-nearest. \add{Given an ambient sample $q_{\mathrm{rand}}\in\mathcal{Q}$ and a set of tree nodes $\mathcal{V}\subset\mathcal{M}$, define}
\begin{equation}
d_M(q,q_{\mathrm{rand}})
=
\big\|P(q)(q_{\mathrm{rand}}-q)\big\|_{M(q)}.
\label{eq:metric_distance}
\end{equation}
\add{Here, $\|x\|_{M(q)} := \sqrt{x^{T}M(q)x}$ denotes the norm induced by the tangent-space metric $M(q)$.} The nearest node is then selected as
\begin{equation}
q_{\mathrm{near}}
=
\arg\min_{q\in\mathcal{V}}
\big\|P(q)(q_{\mathrm{rand}}-q)\big\|_{M(q)}.
\label{eq:nearest_metric_full}
\end{equation}
\add{Although $q_{\mathrm{rand}}$ need not lie on $\mathcal{M}$, the displacement is projected onto $T_q\mathcal{M}$ before evaluation. Hence, the comparison depends only on locally equality-consistent first-order motions.} Moreover, by Lemma~\ref{lem:metric_equivalence}, the quadratic form can be evaluated using the ambient barrier metric $G(q)$ once the projection has been applied. A full metric-aware search over all nodes would be unnecessarily expensive. We therefore use Euclidean preselection followed by metric reranking. Let $\mathcal{V}_K\subset\mathcal{V}$ denote the set of the $K$ Euclidean-nearest candidates to $q_{\mathrm{rand}}$, and perform reranking only over this subset:
\begin{equation}
q_{\mathrm{near}}
=
\arg\min_{q\in\mathcal{V}_K}
\big\|P(q)(q_{\mathrm{rand}}-q)\big\|_{M(q)}.
\label{eq:nearest_metric_subset}
\end{equation}

\begin{remark}[Consistency of Euclidean Preselection]
\label{rem:euclidean_preselection}
The two-stage strategy in Algorithm~\ref{alg:metricnearest} is consistent with metric-aware selection when $K$ is sufficiently large, because Proposition~\ref{prop:metric_distortion} implies norm equivalence between the Euclidean norm and the induced tangent-space metric on the tangent space. Specifically, for any feasible node $q\in\mathcal{F}$ with clearance $\delta_{\min}>0$ and any tangent-projected displacement $\tilde r=P(q)(q_{\mathrm{rand}}-q)$,
\begin{equation}
\sqrt{\lambda}\,\|\tilde r\|
\;\le\;
\|\tilde r\|_{M(q)}
\;\le\;
\sqrt{\Lambda}\,\|\tilde r\|,
\label{eq:bilipschitz_preselection}
\end{equation}
where $\lambda$ and $\Lambda$ are the spectral bounds implied by Proposition~\ref{prop:metric_distortion}.
\end{remark}

\begin{remark}[Asymmetry of the Metric-Aware Distance]
\label{rem:asymmetric_metric_distance}
The distance $d_M(q,q_{\mathrm{rand}})$ is evaluated using the local metric at the tree node $q$, not at the sampled configuration $q_{\mathrm{rand}}$. This is intentional: the metric at $q$ encodes the local geometry of the region from which expansion occurs, ensuring consistency between nearest-neighbor selection and subsequent steering.
\end{remark}

In our implementation, the candidate size is chosen adaptively as
\begin{equation}
K
=
\min\!\bigl(K_{\max},\;\max(4,\;\lfloor 2\log_2(|\mathcal{V}|+1)\rfloor)\bigr),
\label{eq:K_adaptive}
\end{equation}
which serves as a practical candidate-pool heuristic prior to metric reranking.
\add{This logarithmic growth balances the quality of metric-aware parent selection against the increasing cost of evaluating larger candidate sets as the tree expands. A detailed ablation and discussion of this design choice are provided in Appendix~\ref{app:kablation}.} The complete nearest-neighbor procedure is summarized in Algorithm~\ref{alg:metricnearest}.

\subsection{Overall RMRRT Algorithm}
\label{subsec:overall}

Algorithm~\ref{alg:rmrrt} summarizes the overall planner. At each iteration, RMRRT samples a random configuration in the ambient space, selects a node from the active tree using the metric-aware nearest-neighbor rule, and performs a local expansion using the tangent-space steering rule. If the resulting state and edge are valid, the new node is inserted into the tree, after which the opposite tree is extended toward it in a connect phase. When finite-step integration introduces noticeable equality drift, a lightweight metric-aware correction step is applied to return the configuration toward the equality-constrained manifold. This correction serves only as a local numerical recovery mechanism; admissible motion is already determined by the tangent-space geometry used in nearest-neighbor selection and steering.

\begin{lemma}[Riemannian Newton Projection]
\label{lem:riemannian_newton_projection}
Given a configuration $q$ with equality violation $\|g(q)\|>\tau$, the iterative update
\begin{equation}
q_{k+1}
=
q_k
-
G^{-1}(q_k)J(q_k)^\top
\bigl(J(q_k)G^{-1}(q_k)J(q_k)^\top\bigr)^{-1}
g(q_k)
\label{eq:riemannian_newton_projection}
\end{equation}
converges locally to the equality-constrained manifold $\mathcal{M}$, provided $G(q)$ is continuous and $J(q)$ has full row rank in a neighborhood of $\mathcal{M}$.
\end{lemma}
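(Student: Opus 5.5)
The plan is to treat \eqref{eq:riemannian_newton_projection} as a Gauss--Newton / minimum-$G$-norm Newton iteration for the underdetermined system $g(q)=0$. Then the standard local quadratic convergence argument for Newton's method goes through, with the right inverse $J^\dagger_G(q):=G^{-1}(q)J(q)^\top\bigl(J(q)G^{-1}(q)J(q)^\top\bigr)^{-1}$ in place of $J^{-1}$.

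\textbf{Step 1: well-definedness and bounds.} Fix $q^\ast\in\mathcal{M}$ and take a closed ball $B_r(q^\ast)$ on which the following hold:
\begin{itemize}
\item $J$ has full row rank with $\sigma_{\min}(J)\ge\sigma>0$;
\item $g$ is $C^1$ with $J$ Lipschitz with constant $L$;
\item $G$ is continuous with $\lambda I\preceq G\preceq\Lambda I$.
\end{itemize}
The spectral bounds on $G$ follow from \eqref{eq:G_GN} and the saturation $\psi\le\psi_{\max}$, as in Proposition~\ref{prop:metric_distortion}. As in Proposition~\ref{prop:steer_wellposed}, the Schur complement $S=JG^{-1}J^\top$ is then positive definite on $B_r$. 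Since $S\succeq \Lambda^{-1}JJ^\top\succeq(\sigma^2/\Lambda)I$, we get the uniform bound
\begin{equation*}
\|J^\dagger_G(q)\|\le \frac{\Lambda}{\lambda\sigma}=:\beta .
\end{equation*}
We also record the right-inverse identity $J(q)J^\dagger_G(q)=I$, which follows directly from the definition.

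\textbf{Step 2: contraction of the residual.} Write the step as $\Delta_k=-J^\dagger_G(q_k)g(q_k)$, so $J(q_k)\Delta_k=-g(q_k)$ by Step 1. Taylor's theorem with the integral remainder gives
\begin{equation*}
g(q_{k+1})=g(q_k)+J(q_k)\Delta_k+R_k=R_k,\qquad \|R_k\|\le\tfrac{L}{2}\|\Delta_k\|^2\le\tfrac{L\beta^2}{2}\|g(q_k)\|^2 .
\end{equation*}
So the residual $e_k=\|g(q_k)\|$ satisfies $e_{k+1}\le c\,e_k^2$ with $c=L\beta^2/2$. If $e_0<\min(1/(2c),\cdot)$, then $e_k\to0$ quadratically.

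\textbf{Step 3: the iterates stay in the ball.} This is the main obstacle, because the constants above are only valid inside $B_r$. We have
\begin{equation*}
\sum_k\|\Delta_k\|\le\beta\sum_k e_k\le 2\beta e_0 .
\end{equation*}
Choose $q_0$ with $\|q_0-q^\ast\|<r/2$ and $e_0$ small enough that $2\beta e_0<r/2$. By continuity of $g$ and $g(q^\ast)=0$, such $q_0$ exist in a neighborhood of $q^\ast$. Induction then keeps every $q_k$ in $B_r(q^\ast)$. The same summability shows $(q_k)$ is Cauchy, so it converges to some $\bar q\in B_r$. Continuity of $g$ gives $g(\bar q)=0$, i.e.\ $\bar q\in\mathcal{M}$.

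Only continuity of $G$ (through the uniform spectral bounds) enters the argument; no differentiability of $G$ is needed. This matters because the activation weight $w_j$ in \eqref{eq:wj_def} is discontinuous at $d_j=\delta_{\mathrm{act}}$. Strictly, continuity of $G$ holds as stated in the lemma's hypothesis. If it failed, uniform bounds on $G$ would still suffice, and I would note this in the proof. The threshold $\tau$ only triggers the correction and plays no role in the convergence proof.
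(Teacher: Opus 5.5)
Your proof is correct, and it is essentially the convergence argument the paper only cites. The paper's proof does two things:
\begin{itemize}
\item it derives the update as the KKT solution of the linearized minimum-$G$-norm problem $\min_{\Delta q}\tfrac12\Delta q^\top G(q_k)\Delta q$ subject to $g(q_k)+J(q_k)\Delta q=0$, and notes that the Schur complement $J G^{-1}J^\top$ is positive definite, so the step is well defined;
\item it then defers local convergence to ``standard equality-constrained Newton arguments'' via a citation.
\end{itemize}

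You skip the variational derivation. The paper needs that derivation for the minimum-$G$-norm interpretation in Remark~\ref{rem:riemannian_projection}, but it plays no role in convergence. Instead, you actually carry out the convergence proof:
\begin{itemize}
\item the identity $J J_G^\dagger=I$ cancels the linear Taylor term exactly;
\item a uniform bound on $J_G^\dagger$ gives $e_{k+1}\le c\,e_k^2$;
\item summability of the steps keeps the iterates in $B_r$ and makes them Cauchy, with limit on $\mathcal{M}$.
\end{itemize}

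What your route buys is a precise accounting of hypotheses. $G$ enters only through a uniform bound on the right inverse, so continuity of $G$ is not needed. This matters because the indicator in \eqref{eq:wj_def} makes the deployed $G$ jump at $d_j=\delta_{\mathrm{act}}$, so the lemma's continuity hypothesis does not literally hold for the implemented metric. The bounds $\lambda I\preceq G\preceq\Lambda I$ hold anyway, from $\lambda>0$ and $\psi\le\psi_{\max}$. Your proof also makes explicit that this is a Newton iteration for the underdetermined system $g(q)=0$. It converges to \emph{some} point of $\mathcal{M}$, not necessarily the $G$-nearest one, which the paper's sketch leaves implicit.

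Two minor points. First, you assume $J$ is Lipschitz, which the lemma does not state. With $g$ merely $C^1$, the same argument works using the modulus of continuity $\omega$ of $J$ on the compact ball: it gives $e_{k+1}\le\beta\,\omega(\beta e_k)\,e_k$, hence still superlinear convergence. Second, Steps~2 and~3 must be run as a single induction, with hypothesis $q_0,\dots,q_k\in B_r$ and $e_j\le 2^{-j}e_0$. The Taylor bound on the segment $[q_k,q_{k+1}]$ needs $q_{k+1}\in B_r$ to be established first, via $\|\Delta_k\|\le\beta e_k$.
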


\begin{proof}
Each iteration solves
\begin{equation}
\Delta q^*
=
\arg\min_{\Delta q}
\frac{1}{2}\Delta q^\top G(q_k)\Delta q
\quad
\text{s.t.}
\quad
g(q_k)+J(q_k)\Delta q = 0.
\label{eq:correction_qp}
\end{equation}
The corresponding KKT system yields
\begin{equation}
\Delta q^*
=
-
G^{-1}(q_k)J(q_k)^\top
\bigl(J(q_k)G^{-1}(q_k)J(q_k)^\top\bigr)^{-1}
g(q_k),
\label{eq:deltaq_star_projection}
\end{equation}
which recovers \eqref{eq:riemannian_newton_projection}. The Schur complement remains positive definite under the same conditions used above, so the step is well-defined, and local convergence follows from standard equality-constrained Newton arguments\add{\cite{nocedal2006numerical}}.
\end{proof}

\begin{remark}[Minimum-Norm Manifold Correction]
\label{rem:riemannian_projection}
The correction step is the minimum-$G$-norm update that removes the first-order equality violation. Because $G(q)$ already reflects nearby inequality geometry, the correction remains consistent with the same local metric used in steering and nearest-neighbor selection.
\end{remark}

Under standard regularity assumptions, the same geometric ingredients remain compatible with the incremental path-covering arguments used in probabilistic completeness analyses for constrained sampling-based planners \add{\cite{kavraki1996probabilistic, lavalle2001randomized, hsu1997expansive, berenson2009manipulation, jaillet2012path, kingston2019exploring}}. \add{We stress that the following result establishes compatibility with these arguments rather than an independent completeness proof for the implemented planner.}

\begin{proposition}[Compatibility with Standard Completeness Arguments]
\label{prop:probabilistic_completeness}
Let $\mathcal{F}\subset\mathcal{M}$ be the feasible subset of the equality-constrained manifold. Suppose there exists a feasible path
\begin{equation}
\sigma:[0,1]\rightarrow\mathcal{F}
\label{eq:feasible_path_sigma}
\end{equation}
connecting $q_{\mathrm{start}}$ to $q_{\mathrm{goal}}$, and that this path has clearance $\delta>0$ from all inequality boundaries. Assume further that the uniform sampling fraction satisfies $\rho_u>0$, the equality Jacobian $J(q)$ has full row rank for all $q\in\mathcal{M}$, and the local metric satisfies the bounded-distortion conditions stated in Proposition~\ref{prop:metric_distortion} along $\sigma([0,1])$. Then the metric-aware nearest-neighbor selection, tangent-space steering, and correction steps of RMRRT remain compatible with the standard path-covering arguments used in probabilistic completeness analyses for constrained sampling-based planning.
\end{proposition}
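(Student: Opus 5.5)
We prove the statement by reducing it to the standard path-covering template used for constrained sampling-based planners \cite{kavraki1996probabilistic,lavalle2001randomized,berenson2009manipulation,jaillet2012path,kingston2019exploring}. The template needs four things along the clearance-$\delta$ path $\sigma$:
\begin{enumerate}[label=(\roman*)]
\item a finite cover of $\sigma([0,1])$ by balls of radius proportional to $\delta$;
\item a positive probability, per iteration, of drawing a sample that pulls the tree into the next ball;
\item a guarantee that the selected node and the resulting steered-and-corrected extension make definite progress toward that ball;
\item a guarantee that the extension stays inside the $\delta$-tube, so validity checks do not reject it.
\end{enumerate}
The idea is to show that each RMRRT operator preserves these properties. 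Their Euclidean counterparts are already known to do so, and the two versions differ only up to constants fixed by Proposition~\ref{prop:metric_distortion}.

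\textbf{Step 1: uniform constants along the path.} Since $\sigma([0,1])$ is compact and $J$ has full row rank on $\mathcal{M}$, we get a uniform lower bound $\sigma_{\min}(J)\ge\underline{s}>0$ on a tubular neighborhood of the path. The clearance $\delta$ also bounds $\bar\delta_{\min}$ from below on that neighborhood. Proposition~\ref{prop:metric_distortion} and Remark~\ref{rem:euclidean_preselection} then give uniform spectral bounds $\lambda\le v^\top M v\le\Lambda$ on $T_q\mathcal{M}$. Continuity of $G$ and $J$ on the compact tube further gives uniform bounds on $\|P(q)\|$, $\|G^{-1}\|$ and $S(q)^{-1}$.

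\textbf{Step 2: steering makes definite progress.} By Proposition~\ref{prop:steer_wellposed} and \eqref{eq:steer_closedform}, $v_0$ is proportional to $M^{-1}\tilde w$ restricted to the tangent space. This gives
\[
\tilde w^\top v_0=\tilde w^\top M^{-1}\tilde w\ge \Lambda^{-1}\|\tilde w\|^2,
\]
so the angle between $\hat v$ and $\tilde w$ is bounded away from $\pi/2$ by a constant depending only on $\kappa\le\Lambda/\lambda$. The $M$-normalized step has Euclidean length in $[\eta/\sqrt\Lambda,\eta/\sqrt\lambda]$. The correction step is handled by Lemma~\ref{lem:riemannian_newton_projection}: for $\eta$ small relative to $\delta$, the Newton correction moves the point by $O(\eta^2)$, using the curvature bound from Step 1. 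Hence the extension stays within the $\delta$-tube, which gives property (iv). It also reduces the distance to any target whose tangent component $\tilde w$ lies in a cone of fixed half-angle, which gives property (iii).

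\textbf{Step 3: sampling probability and nearest-neighbor selection.} Let $B_k,B_{k+1}$ be consecutive balls of radius $r\sim\delta$ along $\sigma$. The set of ambient samples $q_{\mathrm{rand}}$ whose projected displacement from any point of $B_k$ falls in the progress cone toward $B_{k+1}$ has positive Lebesgue measure. Since $\rho_u>0$, each such sample is drawn with probability at least $p_0>0$. This is the main obstacle, because the reranked node may differ from the Euclidean-nearest one. We plan to handle it with the norm equivalence \eqref{eq:bilipschitz_preselection}: for $K$ at least the number of tree nodes within Euclidean radius $\sqrt{\Lambda/\lambda}$ times the minimum distance, the metric-optimal node also lies among the Euclidean candidates. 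The adaptive $K$ grows logarithmically, so we only claim compatibility and state this as a hypothesis. Any node chosen by reranking has $d_M$ within factor $\sqrt{\Lambda/\lambda}$ of the Euclidean optimum, which keeps it in a bounded enlargement $c B_k$ of the current ball. Shrinking the radii by this constant factor restores the covering argument.

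\textbf{Conclusion.} Each of the finitely many ball-to-ball transitions then succeeds with probability at least $p_0>0$ per iteration. The usual geometric-trials bound gives a failure probability that decays exponentially in $N_{\max}$. This is exactly the structure the cited completeness proofs require, so RMRRT is compatible with them.
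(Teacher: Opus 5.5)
Your skeleton matches the paper's proof: uniform bounds on $G$ along $\sigma$, transfer to $M(q)$ on $T_q\mathcal{M}$ via Lemma~\ref{lem:metric_equivalence}, a finite ball cover hit with probability bounded below because $\rho_u>0$, and a local-progress claim. Your Step~2 is more explicit than the paper, which only notes that $\|\hat v\|$ is bounded above and below. The estimate $\tilde w^\top v_0\ge\Lambda^{-1}\|\tilde w\|^2$ shows the cosine of the angle between $\hat v$ and $\tilde w$ is at least $\lambda/\Lambda$, and the $O(\eta^2)$ correction bound handles property (iv). The gap is in Step~3, where you try to \emph{derive} the localization of the reranked node from \eqref{eq:bilipschitz_preselection}. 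The paper never derives this; it assumes it.

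That inequality compares $\|\tilde r\|$ with $\|\tilde r\|_{M(q)}$ only for the projected displacement $\tilde r=P(q)(q_{\mathrm{rand}}-q)$. It gives no bound of the form $d_M(q,q_{\mathrm{rand}})\ge c\,\|q_{\mathrm{rand}}-q\|$, and none can hold, because $P(q)$ annihilates the $m_{\mathrm{eq}}$-dimensional column space of $G(q)^{-1}J(q)^\top$. So a node far from $B_k$ whose displacement to $q_{\mathrm{rand}}$ is nearly $G$-normal has $d_M\approx 0$. Concretely, take the unit sphere with no active proxies, so $G=\lambda I$ and $P(q)=I-qq^\top$, and let $q_{\mathrm{rand}}\in\mathcal{M}$. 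The node $q=-q_{\mathrm{rand}}$ then has $d_M=0$ exactly, and it wins the reranking whenever it is among the $K$ candidates. Hence the globally metric-optimal node need not lie within any Euclidean radius of $q_{\mathrm{rand}}$, so your sufficient condition on $K$ is false. Likewise, $d_M(q_{\mathrm{sel}},q_{\mathrm{rand}})\le d_M(q_E,q_{\mathrm{rand}})$ only bounds the tangential part $\|P(q_{\mathrm{sel}})(q_{\mathrm{rand}}-q_{\mathrm{sel}})\|$; it says nothing about whether $q_{\mathrm{sel}}$ lies in $cB_k$. Shrinking the radii cannot repair this, because the defect is a kernel, not a constant-factor distortion. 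You should replace the derivation with an explicit hypothesis, as the paper does with its local-neighborhood coverage assumption. It must constrain the node \emph{returned} by preselection plus reranking, not merely the candidate pool: for samples in your good set, the selected node lies in a fixed enlargement of $B_k$. This is locally consistent, since for $q,q'\in\mathcal{M}$ with $\|q'-q\|\le\rho$ a curvature bound gives $\|P(q)(q'-q)\|\ge(1-C\rho)\|q'-q\|$. But it does not follow from Proposition~\ref{prop:metric_distortion}. With that hypothesis in place of your claim, the rest of your argument goes through.
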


\begin{proof}
The argument proceeds in three steps.

\textit{Step 1 (Uniform metric bounds along the feasible path).}
Because $\sigma$ has positive clearance $\delta>0$, the barrier activation weights remain uniformly bounded along $\sigma([0,1])$. Together with Proposition~\ref{prop:metric_distortion}, this implies the existence of constants $0<\lambda\le\Lambda<\infty$ such that, for every $q\in\sigma([0,1])$ and every $v\in\mathbb{R}^n$,
\begin{equation}
\lambda \|v\|^2
\le
v^\top G(q)v
\le
\Lambda \|v\|^2.
\label{eq:G_quadratic_bounds_along_sigma}
\end{equation}

\textit{Step 2 (Uniform norm equivalence on admissible motions).}
For every tangent vector $v\in T_q\mathcal{M}$, Lemma~\ref{lem:metric_equivalence} gives
\[
v^\top M(q)v = v^\top G(q)v.
\]
Hence
\begin{equation}
\lambda \|v\|^2
\le
v^\top M(q)v
\le
\Lambda \|v\|^2,
\qquad
\forall q\in\sigma([0,1]),\;\forall v\in T_q\mathcal{M}.
\label{eq:M_quadratic_bounds_along_sigma}
\end{equation}

\textit{Step 3 (Existence of admissible local progress).}
Let $\hat v\in T_q\mathcal{M}$ denote the unit-$M(q)$-norm steering direction returned by Algorithm~\ref{alg:metricsteer}. By \eqref{eq:M_quadratic_bounds_along_sigma}, its Euclidean norm is uniformly bounded above and below, so the steering rule generates a nondegenerate tangent displacement. Now cover the feasible path $\sigma$ by a finite sequence of overlapping balls of radius $r=\eta/(2\sqrt{\Lambda})$. Because each ball has positive measure and the uniform sampling fraction satisfies $\rho_u>0$, these neighborhoods are sampled infinitely often with probability one. Whenever a tree node lies in one ball and a subsequent sample falls in the next overlapping ball, the local norm-equivalence bounds imply that the metric-aware steering rule admits progress of controlled size toward the next neighborhood. Under the additional local-neighborhood coverage assumption that the Euclidean preselection stage retains access to a node in the relevant neighborhood, the subsequent metric-aware reranking and steering preserve this local progress property. Feasibility is enforced separately through state validity, edge validity, and the correction step. Since the feasible path has positive clearance, sufficiently small steps remain inside the admissible region and can be returned to $\mathcal{M}$ with bounded correction. Therefore, under the stated regularity, clearance, and local-neighborhood coverage assumptions, the metric-aware components of RMRRT remain compatible with the standard path-covering mechanism used in probabilistic completeness arguments for constrained sampling-based planning.
\end{proof}

\add{\begin{remark}[Theory vs. Implemented Assumptions]
\label{rem:theory_vs_implementation}
The metric construction is motivated by the exact barrier Hessian~\eqref{eq:G_ideal}, which assumes differentiable inequality margins and $G(q)\succ0$. The implemented planner instead uses the Gauss--Newton surrogate~\eqref{eq:G_GN}, which assumes only (a)~tractable proxy-point signed distances and Jacobians, (b)~the regularizer $\lambda>0$, which enforces
\begin{equation*}
G(q)\succeq\lambda I\succ0
\end{equation*}
irrespective of the surrogate quality, (c)~full row rank of $J(q)$ along the solution path, (d)~positive clearance $\delta>0$ of that path, and (e)~local-neighborhood coverage of the Euclidean preselection stage for finite $K$. The completeness-compatibility argument of Proposition~\ref{prop:probabilistic_completeness} uses only assumptions (b)--(e); the exact-Hessian form is never required.
\end{remark}}

\begin{remark}[Barrier Metric as Geometric Bias]
\label{rem:probabilistic_completeness}
The barrier metric is introduced to bias exploration geometrically, not to suppress it. Equality constraints define the admissible tangent structure, while inequality information reshapes the local geometry on that space through the induced tangent-space metric. The correction step is used only to remove finite-step drift from the manifold, whereas hard feasibility is enforced separately through state and edge validity checks.
\end{remark}

\subsection{Computational Complexity and Practical Overhead}

We now examine the additional computational overhead introduced by the metric-aware operations. The dominant added cost arises from constructing the ambient barrier metric and reranking a small set of Euclidean-preselected candidates during nearest-neighbor search.

\begin{proposition}[Per-Expansion Complexity]
\label{prop:complexity}
One metric-aware expansion of RMRRT, including local metric construction, tangent projection, and nearest-neighbor reranking over $K$ Euclidean-preselected candidates, has computational complexity
\begin{equation}
O\!\left(K\,(n^2 m_{\mathrm{eq}} + n^2 m_{\mathrm{proxy}})\right),
\label{eq:complexity}
\end{equation}
where $n$ is the configuration dimension, $m_{\mathrm{eq}}$ is the number of equality constraints, $m_{\mathrm{proxy}}$ is the number of proxy points, and $K$ is the candidate size used in Algorithm~\ref{alg:metricnearest}.
\end{proposition}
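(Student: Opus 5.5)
The plan is to count operations per call of \textsc{BuildMetric}, multiply by the number of calls in one expansion, and add the lower-order costs of steering and reranking. One expansion runs \textsc{MetricNearest} over the $K$ candidates in $\mathcal{V}_K$, calling \textsc{BuildMetric} once per candidate, and then \textsc{MetricSteer}, which calls \textsc{BuildMetric} once more. Hence the work is $(K+1)$ metric constructions plus $O(K)$ projections and quadratic-form evaluations. The claim then reduces to showing that one \textsc{BuildMetric} call, together with the subsequent projection and distance evaluation, costs $O(n^2 m_{\mathrm{eq}} + n^2 m_{\mathrm{proxy}})$. I treat forward kinematics, signed distances and Jacobians of a single proxy point as $O(n)$ each. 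I also assume the Euclidean preselection, for example via a KD-tree, costs no more than the stated bound.

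For the ambient metric \eqref{eq:G_GN}, each proxy point $j$ contributes $w_j J_j^\top J_j$. Here $J_j\in\mathbb{R}^{3\times n}$, so this outer product costs $O(9n^2)=O(n^2)$. Summing over proxies gives $O(n^2 m_{\mathrm{proxy}})$, and adding $\lambda I$ is $O(n)$. For the projector \eqref{eq:proj}, I would not form $G^{-1}$ explicitly. Instead I would factor $G$ once with a Cholesky decomposition and solve $G X = J^\top$ for the $m_{\mathrm{eq}}$ right-hand sides, at cost $O(n^2 m_{\mathrm{eq}})$ after factorization. Next, $A = J X$ costs $O(n m_{\mathrm{eq}}^2)$ and $A^{-1}$ costs $O(m_{\mathrm{eq}}^3)$. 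Since $m_{\mathrm{eq}}\le n$ by the full-row-rank assumption, both are dominated by $O(n^2 m_{\mathrm{eq}})$.

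Applying $P$ to a vector as $w - X A^{-1}(J w)$ costs $O(n m_{\mathrm{eq}})$. The metric distance can then be evaluated with $G$ rather than $M$, by Lemma~\ref{lem:metric_equivalence} and Remark~\ref{rem:metric_eval}, at cost $O(n^2)$. Steering needs $M^{-1}\tilde w$. Following the structure of Proposition~\ref{prop:steer_wellposed}, this equals the $G$-orthogonal tangent solution, which can be computed from the existing factorization through $P G^{-1}\tilde w$ in $O(n^2 + n m_{\mathrm{eq}})$. Collecting these terms, each candidate costs $O(n^2 m_{\mathrm{eq}} + n^2 m_{\mathrm{proxy}})$, and multiplying by $K+1=O(K)$ gives \eqref{eq:complexity}.

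The main obstacle is the cubic Cholesky factorization of the $n\times n$ matrix $G$, which costs $O(n^3)$ and is not obviously covered by the bound. I would first try the Woodbury identity. Because $G=\lambda I + \tilde J^\top W \tilde J$ with $\tilde J\in\mathbb{R}^{3m_{\mathrm{proxy}}\times n}$, the solves $G^{-1}J^\top$ reduce to a $3m_{\mathrm{proxy}}$-dimensional system, which costs $O(n\,m_{\mathrm{proxy}}^2 + m_{\mathrm{proxy}}^3 + n m_{\mathrm{proxy}} m_{\mathrm{eq}})$. If that fails to fit when $m_{\mathrm{proxy}}$ is large, I would fall back on stating that $n$ is treated as bounded by $m_{\mathrm{eq}}+m_{\mathrm{proxy}}$ up to constants, so $n^3\lesssim n^2(m_{\mathrm{eq}}+m_{\mathrm{proxy}})$. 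This is the case in the paper's settings, and I would state it explicitly as an assumption.
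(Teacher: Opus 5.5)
Your accounting matches the paper's proof. Per candidate it is $O(n^2 m_{\mathrm{proxy}})$ to accumulate the rank-3 terms of $G$ plus $O(n^2 m_{\mathrm{eq}})$ for $G^{-1}J^\top$ and the Schur complement, multiplied by $K$. Your extra \textsc{MetricSteer} call, giving $K+1=O(K)$, is harmless, and your use of $m_{\mathrm{eq}}\le n$ is a slightly sharper justification than the paper's ``$m_{\mathrm{eq}}\ll n$''.

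Where you go beyond the paper is the factorization of $G$. The paper prices ``solving for $G^{-1}J^\top$'' at $O(n^2 m_{\mathrm{eq}})$ and never charges the $O(n^3)$ needed to factor a dense $G$; the Woodbury remark after the proof only covers $m_{\mathrm{proxy}}\ll n$. So you have correctly located the one step the paper glosses over.

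Your two branches close this gap unconditionally, and the fallback assumption can be dropped:
\begin{itemize}
\item If $m_{\mathrm{proxy}}\le n$, Woodbury costs $O(n m_{\mathrm{proxy}}^2+m_{\mathrm{proxy}}^3+n m_{\mathrm{proxy}} m_{\mathrm{eq}}+m_{\mathrm{proxy}}^2 m_{\mathrm{eq}})\subseteq O(n^2 m_{\mathrm{proxy}}+n^2 m_{\mathrm{eq}})$.
\item If $m_{\mathrm{proxy}}>n$, dense Cholesky costs $O(n^3)\subseteq O(n^2 m_{\mathrm{proxy}})$.
\end{itemize}
In other words, your assumption $n=O(m_{\mathrm{eq}}+m_{\mathrm{proxy}})$ holds automatically in exactly the regime where Woodbury does not fit.

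Your reading of the steering solve is also right and worth stating explicitly. One checks $M=P^\top GP=GP=G-J^\top S^{-1}J$, which is singular on $\mathbb{R}^n$, so the literal solve $Mv=\tilde w$ in Algorithm~\ref{alg:metricsteer} is a solve with a singular matrix. The KKT conditions of \eqref{eq:steer_opt} give $v^\star\propto PG^{-1}\tilde w$, which reuses the existing factorization at $O(n^2)$ cost.

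One small change: do not assume the Euclidean preselection fits the bound. A KD-tree or linear-scan query depends on $|\mathcal{V}|$, not only on $K$. Instead, follow the paper and count only the metric-aware overhead beyond standard nearest-neighbor queries and validity checking. This also disposes of the forward-kinematics and signed-distance evaluations, which the paper treats as reused from validity checking.
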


\begin{proof}
We focus on the additional metric-aware cost beyond standard validity checking. For each candidate node, forming the ambient barrier metric in \eqref{eq:G_GN} requires accumulating $m_{\mathrm{proxy}}$ rank-3 outer products, with cost $O(n^2 m_{\mathrm{proxy}})$. Constructing the tangent projector requires solving for $G^{-1}(q)J(q)^\top$ and forming the associated Schur complement, whose dominant cost is $O(n^2 m_{\mathrm{eq}})$ when $m_{\mathrm{eq}} \ll n$. Since Algorithm~\ref{alg:metricnearest} reranks $K$ candidates, the total added cost is
\[
O\!\left(K\,(n^2 m_{\mathrm{eq}} + n^2 m_{\mathrm{proxy}})\right).
\]
Thus, when $m_{\mathrm{eq}} \ll n$, the metric-aware overhead is dominated by local metric construction and scales linearly with the number of proxy points.
\end{proof}

\begin{remark}[Low-Rank Structure and Efficient Inversion]
\label{rem:woodbury}
The ambient barrier metric
\[
G(q)=\lambda I+\sum_{j=1}^{m_{\mathrm{proxy}}} w_j(q)\,J_j(q)^\top J_j(q)
\]
has a low-rank-plus-diagonal structure. Let
\[
\bar J=
\big[\sqrt{w_1}J_1;\ldots;\sqrt{w_{m_{\mathrm{proxy}}}}J_{m_{\mathrm{proxy}}}\big]
\in
\mathbb{R}^{3m_{\mathrm{proxy}}\times n}.
\]
Then Woodbury’s identity\add{\cite{golub2013matrix}} gives
\begin{equation}
G^{-1}
=
\frac{1}{\lambda}
\left(
I-\bar J^\top
(\lambda I_{3m_{\mathrm{proxy}}}+\bar J\bar J^\top)^{-1}
\bar J
\right),
\label{eq:woodbury}
\end{equation}
which reduces the cost of applying $G^{-1}$ when $m_{\mathrm{proxy}} \ll n$.
\end{remark}
\begin{figure}[!t]
\centering
\includegraphics[width=3.0in]{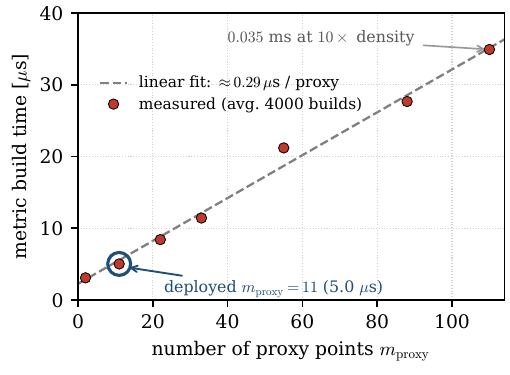}
\caption{\add{Scaling of ambient-metric build time with proxy density in the dual-arm scenario.}}
\label{fig:proxy_density}
\end{figure}

\begin{table}[!t]
\centering
\caption{\add{Proxy-density trade-off in the dual-arm scenario.}}
\label{tab:proxy_density}
{\color{red}
\footnotesize
\setlength{\tabcolsep}{3pt}
\begin{tabular}{@{}lcccccc@{}}
\toprule
proxy set & $m_{\mathrm{proxy}}$ & valid-ext.\ ratio & added nodes & iters. & path len. & time (s) \\
\midrule
sparse   & $2$  & $0.744$ & $1000$ & $261$ & $21.8$ & $0.766$ \\
\textbf{deployed} & $11$ & $0.752$ & $498$  & $197$ & $18.8$ & $0.435$ \\
dense    & $22$ & $0.840$ & $2066$ & $425$ & $20.8$ & $1.684$ \\
\bottomrule
\end{tabular}
\color{black}}
\end{table}
\begin{remark}[Practical Overhead Relative to Validity Checking]
\label{rem:practical_overhead}
In constrained sampling-based planning, collision and validity checking often dominate runtime. The metric-aware computations reuse the same signed distances and point Jacobians already computed or cached for validity evaluation, so their marginal overhead is largely algebraic. Moreover, because the barrier metric biases expansions away from nearby invalid regions, it can partially offset its own cost by reducing rejection-driven validity-check cycles in cluttered environments.
\end{remark}
{\color{red}
We further analyze how proxy density affects both metric-construction cost and planning behavior.
As shown in Fig.~\ref{fig:proxy_density}, the ambient-metric build time increases approximately linearly with the number of proxy points, but remains sub-millisecond even at \(10\times\) the deployed density.
Thus, within the tested range, metric construction is not the runtime bottleneck; the dominant per-iteration cost remains collision and validity checking. However, the low construction cost does not imply that using more proxy points always yields a more informative metric. Proxy density determines where the barrier metric senses obstacle proximity and therefore shapes the exploration bias. A sparse proxy set provides insufficient coverage of relevant link--obstacle proximity directions, leading to weak steering guidance and inefficient tree expansion.

In contrast, an overly dense proxy set imposes strong penalties around many local obstacle-proximity directions. This increases the valid-extension ratio, but also makes local expansion more conservative, reducing goal-directed progress and increasing the number of node expansions and iterations. This trade-off is summarized in Table~\ref{tab:proxy_density}. The deployed density, \(m_{\mathrm{proxy}}=11\), lies at the resulting efficiency knee. It achieves the lowest planning time, the fewest added nodes and iterations, and the shortest path among the tested densities, while maintaining a high valid-extension ratio. This density sweep is not used to tune the proxy density separately for each scene. Rather, it characterizes the coverage--efficiency trade-off induced by the fixed geometric link-coverage heuristic. The heuristic and the corresponding per-scenario proxy compositions are described in the experimental setup in Section~\ref{section4}. The sweep supports that this fixed choice provides a practical balance between metric informativeness and planning efficiency.}

\section{Experiments}
\label{section4}

This section evaluates RMRRT under coupled equality and inequality constraints in terms of reliability, efficiency, and practical applicability. \add{Consistent with the scope clarified in Section~\ref{section3}, the metric incorporates only distance- and proximity-based geometric inequalities, instantiated primarily from robot--obstacle proximity. Other feasibility constraints, including joint limits, self-collision, and task-specific validity conditions, are enforced separately through standard state- and edge-validity checks. Thus, the metric serves as an exploration bias, while hard feasibility is guaranteed independently by validity checking.} The method is evaluated on single-arm, multi-arm, and humanoid systems across constrained manipulation scenarios of increasing geometric and kinematic complexity. We report planning success, runtime, and computational overhead, and use ablation studies to isolate the contributions of the proposed metric-aware components. Hardware experiments further demonstrate practical applicability in physical deployment. Additional videos are available on the project page: \url{https://rcilab.github.io/rmrrt}.

\begin{figure*}[t]
\centering
\subfloat[]{%
  \includegraphics[width=0.95\textwidth]{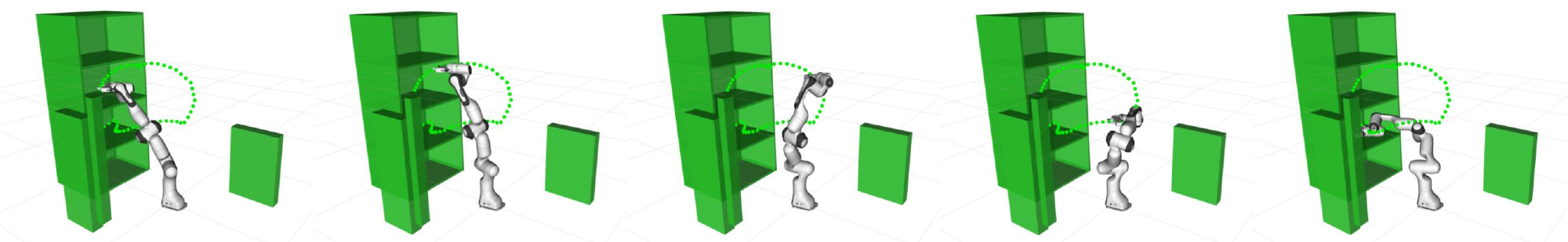}%
  \label{fig5_a}%
}\\[-0.2em]
\subfloat[]{%
  \includegraphics[width=0.95\textwidth]{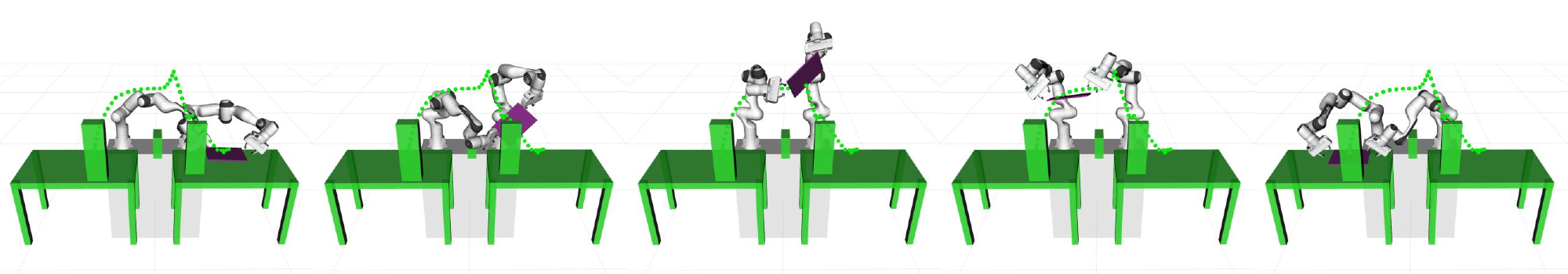}%
  \label{fig5_b}%
}\\[-0.2em]
\subfloat[]{%
  \includegraphics[width=0.95\textwidth]{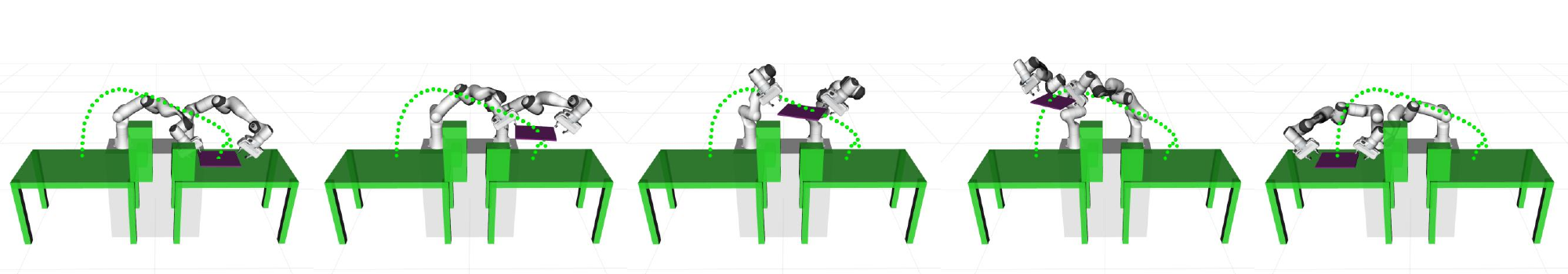}%
  \label{fig5_c}%
}\\[-0.2em]
\subfloat[]{%
  \includegraphics[width=0.95\textwidth]{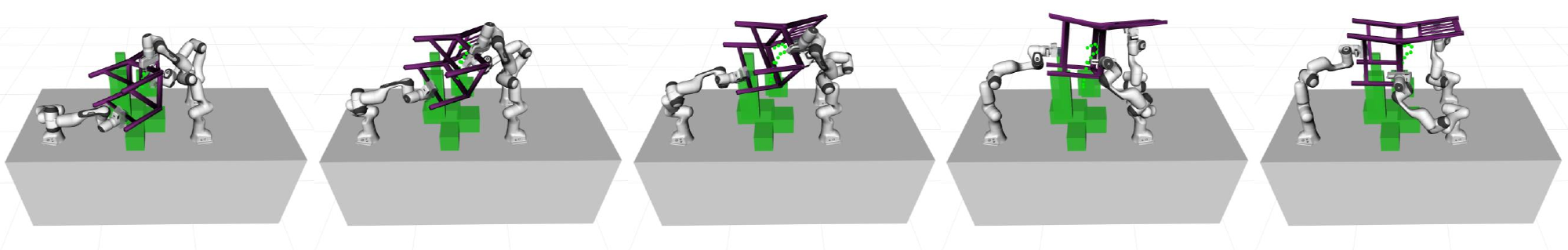}%
  \label{fig5_d}%
}
\caption{Planned trajectories in four representative manipulation scenarios. The dotted curve indicates the end-effector (or object) trajectory, and snapshots depict key configurations along each path.
(a) Single-arm cabinet reaching with fixed orientation constraint.
(b) Dual-arm constrained transfer.
(c) Dual-arm transfer with fixed orientation constraint.
(d) Triple-arm constrained manipulation.}
\label{fig:panda_paths}
\end{figure*}
\begin{table*}[t]
\centering
\caption{Planning performance across four experimental scenarios.}
\label{tab:benchmark_4scenarios}
\setlength{\tabcolsep}{5pt}
\renewcommand{\arraystretch}{1.05}
\arrayrulecolor{black}

\footnotesize
\begin{tabular}{l ccc ccc}
\toprule
\multirow{2}{*}[-2pt]{\textbf{Planner}} &
\multicolumn{3}{c}{\textbf{Scenario 1}} &
\multicolumn{3}{c}{\textbf{Scenario 2}} \\
\cmidrule(lr){2-4}\cmidrule(lr){5-7}
& \textbf{Time (s)} & \textbf{Succ. (\%)} & \textbf{Path Len.}
& \textbf{Time (s)} & \textbf{Succ. (\%)} & \textbf{Path Len.} \\
\midrule

\multicolumn{7}{@{}l}{\emph{Projection- and atlas-based constrained samplers}}\\
\addlinespace[1pt]
CBiRRT \cite{berenson2009manipulation}
& $0.36 \pm 0.54$ & 100 & $8.54 \pm 3.68$
& $2.97 \pm 4.48$ & 100 & $9.07 \pm 2.75$ \\
Atlas RRT \cite{jaillet2012path}
& $1.04 \pm 2.65$ & 100 & $7.45 \pm 3.08$
& $3.87 \pm 6.34$ & 100 & $9.87 \pm 3.22$ \\
Tangent Bundle RRT \cite{kim2016tangent}
& $0.87 \pm 3.34$ & 100 & $9.12 \pm 3.05$
& $3.03 \pm 4.32$ & 100 & $9.61 \pm 3.83$ \\

\addlinespace[2pt]
\multicolumn{7}{@{}l}{
\textcolor{red}{\emph{Metric-based local methods (with equality projection)}}}\\
\addlinespace[1pt]
\textcolor{red}{RMP \cite{ratliff2018riemannian,cheng2021rmpflow}}
& \textcolor{red}{$0.36 \pm 0.18$}
& \textcolor{red}{56}
& \textcolor{red}{$5.00 \pm 1.95$}
& \textcolor{red}{$0.94 \pm 0.59$}
& \textcolor{red}{80}
& \textcolor{red}{$6.57 \pm 1.59$} \\
\textcolor{red}{CHOMP \cite{ratliff2009chomp}}
& \textcolor{red}{$2.59 \pm 2.90$}
& \textcolor{red}{38}
& \textcolor{red}{$10.34 \pm 4.12$}
& \textcolor{red}{$2.62 \pm 3.73$}
& \textcolor{red}{67}
& \textcolor{red}{$12.61 \pm 7.63$} \\
\textcolor{red}{Artificial Potential Field \cite{khatib1986real}}
& \textcolor{red}{$0.36 \pm 0.16$}
& \textcolor{red}{54}
& \textcolor{red}{$5.08 \pm 1.79$}
& \textcolor{red}{$0.93 \pm 0.54$}
& \textcolor{red}{84}
& \textcolor{red}{$6.84 \pm 2.22$} \\

\addlinespace[2pt]
\multicolumn{7}{@{}l}{\emph{Learning-based planners}}\\
\addlinespace[1pt]
LJCMP \cite{park2024constrained}
& $1.17 \pm 3.30$ & 100 & $12.45 \pm 4.71$
& $0.28 \pm 0.65$ & 100 & $11.97 \pm 4.45$ \\
\textcolor{red}{CoMPNetX \cite{qureshi2022compnetx}}
& \textcolor{red}{$0.27 \pm 0.67$}
& \textcolor{red}{100}
& \textcolor{red}{$6.10 \pm 2.11$}
& \textcolor{red}{$3.01 \pm 3.77$}
& \textcolor{red}{100}
& \textcolor{red}{$8.61 \pm 2.86$} \\
Precomputed Graph RRT \cite{csucan2012motion}
& $0.42 \pm 0.69$ & 100 & $8.89 \pm 3.79$
& $2.72 \pm 3.67$ & 100 & $9.06 \pm 2.62$ \\

\midrule
\textbf{RMRRT (ours)}
& $\mathbf{0.09 \pm 0.14}$ & 100 & $8.05 \pm 3.33$
& $\mathbf{0.23 \pm 0.31}$ & 100 & $10.92 \pm 4.86$ \\

\midrule\midrule
\multirow{2}{*}[-2pt]{\textbf{Planner}} &
\multicolumn{3}{c}{\textbf{Scenario 3}} &
\multicolumn{3}{c}{\textbf{Scenario 4}} \\
\cmidrule(lr){2-4}\cmidrule(lr){5-7}
& \textbf{Time (s)} & \textbf{Succ. (\%)} & \textbf{Path Len.}
& \textbf{Time (s)} & \textbf{Succ. (\%)} & \textbf{Path Len.} \\
\midrule

\multicolumn{7}{@{}l}{\emph{Projection- and atlas-based constrained samplers}}\\
\addlinespace[1pt]
CBiRRT \cite{berenson2009manipulation}
& $4.58 \pm 9.20$ & 100 & $9.53 \pm 2.78$
& $74.75 \pm 118.59$ & 90 & $10.67 \pm 2.49$ \\
Atlas RRT \cite{jaillet2012path}
& $5.71 \pm 9.27$ & 100 & $9.56 \pm 3.23$
& $42.11 \pm 105.49$ & 92 & $10.54 \pm 3.64$ \\
Tangent Bundle RRT \cite{kim2016tangent}
& $4.94 \pm 10.11$ & 100 & $9.80 \pm 7.35$
& $15.80 \pm 57.35$ & 87 & $11.01 \pm 3.89$ \\

\addlinespace[2pt]
\multicolumn{7}{@{}l}{
\textcolor{red}{\emph{Metric-based local methods (with equality projection)}}}\\
\addlinespace[1pt]
\textcolor{red}{RMP \cite{ratliff2018riemannian,cheng2021rmpflow}}
& \textcolor{red}{$1.11 \pm 0.64$}
& \textcolor{red}{72}
& \textcolor{red}{$6.76 \pm 1.23$}
& \textcolor{red}{$4.51 \pm 11.35$}
& \textcolor{red}{88}
& \textcolor{red}{$9.18 \pm 1.64$} \\
\textcolor{red}{CHOMP \cite{ratliff2009chomp}}
& \textcolor{red}{$1.55 \pm 3.83$}
& \textcolor{red}{31}
& \textcolor{red}{$7.22 \pm 3.99$}
& \textcolor{red}{$0.64 \pm 0.38$}
& \textcolor{red}{41}
& \textcolor{red}{$9.02 \pm 1.31$} \\
\textcolor{red}{Artificial Potential Field \cite{khatib1986real}}
& \textcolor{red}{$1.06 \pm 0.66$}
& \textcolor{red}{73}
& \textcolor{red}{$6.86 \pm 1.76$}
& \textcolor{red}{$2.23 \pm 1.33$}
& \textcolor{red}{85}
& \textcolor{red}{$9.07 \pm 1.51$} \\

\addlinespace[2pt]
\multicolumn{7}{@{}l}{\emph{Learning-based planners}}\\
\addlinespace[1pt]
LJCMP \cite{park2024constrained}
& $0.48 \pm 1.56$ & 100 & $12.31 \pm 3.34$
& $0.55 \pm 1.69$ & 100 & $16.44 \pm 5.65$ \\
\textcolor{red}{CoMPNetX \cite{qureshi2022compnetx}}
& \textcolor{red}{$4.44 \pm 6.29$}
& \textcolor{red}{100}
& \textcolor{red}{$8.91 \pm 2.86$}
& \textcolor{red}{$19.25 \pm 52.08$}
& \textcolor{red}{100}
& \textcolor{red}{$9.80 \pm 2.44$} \\
Precomputed Graph RRT \cite{csucan2012motion}
& $2.07 \pm 3.29$ & 100 & $9.51 \pm 2.80$
& $13.34 \pm 35.11$ & 100 & $11.41 \pm 4.23$ \\

\midrule
\textbf{RMRRT (ours)}
& $\mathbf{0.25 \pm 0.26}$ & 100 & $12.63 \pm 8.17$
& $\mathbf{0.31 \pm 0.39}$ & 100 & $11.09 \pm 3.42$ \\
\bottomrule
\end{tabular}
\end{table*}
\subsection{Simulation Benchmarks}
\label{simulation_benchmark}
The simulation benchmarks comprise constrained manipulation scenarios ranging from single-arm to triple-arm systems. All experiments were conducted on the same computing platform equipped with an Intel Core i7-13700F processor, 32~GB of RAM, and an NVIDIA GeForce RTX 4060 GPU. \add{For each scenario, we evaluated 100 independent feasible queries, each with independently generated start and goal configurations and obstacle layouts on the constraint manifold.} A trial was considered successful only if a valid path was found within 300~s. All methods were evaluated under identical experimental conditions and validity-checking procedures, while their internal parameters were tuned as fairly as possible according to quantities with analogous functional roles.

\add{The proxy points used by the barrier metric were fixed once, offline, using a link-coverage heuristic. End-effector proxies were placed at the tips of the kinematic chains, intermediate-link proxies were distributed approximately uniformly between each base and tip, and an object-coupling proxy was added at the centroid of the end-effector positions for multi-arm transport tasks. A single point was used per host link in the manipulator scenarios, whereas larger links and the carried object of the Unitree H1-2 were represented by local point clusters and corner proxies, respectively. The per-scenario proxy composition and problem scale are reported in Appendix~\ref{app:params} and Table~\ref{tab:proxy_params}. The comparison includes three complementary classes of baselines. First, we consider projection- and atlas-based constrained planners, including CBiRRT~\cite{berenson2009manipulation}, Atlas-RRT~\cite{jaillet2012path}, and Tangent Bundle RRT~\cite{kim2016tangent,jaillet2012path}, which operate directly on implicitly defined equality-constrained manifolds. Second, to evaluate the role of inequality-aware local geometry, we include Artificial Potential Field (APF)~\cite{khatib1986real}, CHOMP~\cite{ratliff2009chomp}, and Riemannian Motion Policy (RMP)~\cite{ratliff2018riemannian,cheng2021rmpflow}, each augmented with the same equality-projection and state- and edge-validity procedures. Third, we compare with learning- and precomputation-based planners, namely LJCMP~\cite{park2024constrained}, CoMPNetX~\cite{qureshi2022compnetx}, and Precomputed Graph RRT~\cite{csucan2012motion}. Fig.~\ref{fig:panda_paths} shows representative tasks and planned trajectories, and Table~\ref{tab:benchmark_4scenarios} summarizes performance across four scenarios of increasing geometric and kinematic complexity.}

RMRRT achieves a $100\%$ success rate in all scenarios while maintaining low mean planning time and low inter-trial variability. The advantage is particularly pronounced in Scenario~4, where a 21-DoF triple-arm system must traverse a narrow admissible region under 12 equality constraints. Conventional manifold-based planners maintain equality feasibility through tangent-space expansion and projection-based correction, but their nearest-neighbor and steering rules generally do not account directly for nearby inequality boundaries. Consequently, candidate motions may be poorly aligned with locally admissible directions and subsequently rejected, leading to repeated steering--rejection cycles in cluttered environments. The APF-, CHOMP-, and RMP-based constrained variants sequentially combine inequality-aware ambient guidance with an independently defined equality projection. Because this projection restores equality feasibility rather than preserving the geometry of the original update, the corrected tangent motion may lose part of the intended obstacle-avoidance bias. RMRRT instead derives the ambient metric $G(q)$, the $G$-orthogonal projector, and the induced tangent-space metric $M(q)$ from a common geometric construction. Inequality-aware steering and equality-consistent motion generation are therefore defined consistently under the same metric, which also governs nearest-neighbor selection. Appendix~\ref{app:ambient} further isolates the complementary roles of barrier-informed local guidance and sampling-based nonlocal exploration by removing all equality constraints. The learning- and precomputation-based baselines achieve high success rates, but their planning time, runtime variability, or path length generally degrade as the problem dimension and constraint complexity increase. In contrast, RMRRT maintains high reliability and low online planning time without relying on task-specific training data or precomputed roadmaps. Overall, the results demonstrate the benefit of combining an inequality-aware ambient metric, equality-consistent tangent projection, metric-aware node selection, and sampling-based nonlocal exploration within a unified geometric framework.
\begin{table*}[t]
\centering
\caption{Computational cost in Scenario~4. (a) per-iteration runtime during planning. \add{(b) one-time offline precomputation required before planning, and whether the resulting artifact survives a change in the planning problem.}}
\label{tab:overhead_breakdown}
\footnotesize
\setlength{\tabcolsep}{6pt}
\renewcommand{\arraystretch}{1.1}
\begin{tabular}{lccc}
\toprule
\multicolumn{4}{@{}l}{\emph{(a) Per-iteration runtime (ms)}} \\
\midrule
\textbf{Component} & \textbf{RMRRT} & \textbf{CBiRRT} & \textbf{LJCMP} \\
\midrule
Metric build          & 0.31 & N/A & N/A \\
Nearest-neighbor      & 0.01 & 0.04 & 1.65 \\
Steering              & 17.56 & 999.35 & 1.48 \\
Collision / validity  & 0.95 & 4.83 & 1.72 \\
\midrule
Total per iteration   & 18.83 & 1004.22 & 4.86 \\
\bottomrule
\end{tabular}

\vspace{3mm}

{\color{red}
\begin{tabular}{lll}
\toprule
\multicolumn{3}{@{}l}{\emph{(b) One-time offline precomputation}} \\
\midrule
\textbf{Method} & \textbf{Offline compute} & \textbf{Reusable after problem change} \\
\midrule
Precomputed Graph RRT & $\sim$$90$\,s ($10{,}000$-config roadmap) & Rebuild if environment/scenario changes \\
LJCMP & $\sim$$263$\,s ($90$\,s data $+$ $173$\,s training) & Retrain if robot/constraint changes \\
\textbf{RMRRT (ours)} & \textbf{none (fully online)} & \textbf{No retraining needed} \\
\bottomrule
\end{tabular}}
\end{table*}

\subsection{Computational Overhead}
\label{computational_overhead}
To complement the theoretical complexity analysis in
Section~\ref{section3}, Table~\ref{tab:overhead_breakdown} jointly
examines the online computational cost incurred during planning and the
one-time offline preparation required by representative baselines. The
online analysis is conducted on Scenario~4, whose ambient
configuration-space dimension is $n=21$, and separates the costs of
metric construction, nearest-neighbor evaluation, steering, and
collision/validity checking. The offline analysis reports the
wall-clock time required to construct the learned or precomputed
representation used by each method and whether the resulting artifact
remains reusable after changes to the planning problem. Together, these
analyses characterize both the runtime overhead introduced by the
proposed metric-aware components and the practical distinction between
a fully online planner and methods that depend on problem-specific
offline representations.

As shown in Table~\ref{tab:overhead_breakdown}(a), the additional
per-iteration cost of metric construction in RMRRT is modest and
accounts for only a small fraction of the total runtime. Metric-aware
nearest-neighbor selection likewise introduces limited overhead because
a compact candidate set is first formed using a Euclidean criterion,
after which metric-based reranking is applied only to those candidates. \add{The largest computational difference between RMRRT and CBiRRT appears
in the steering stage. CBiRRT uses projection-based local extension:
it first generates an ambient Euclidean step and then projects the
resulting configuration onto the equality-constrained manifold. Because
this projection-based expansion is repeatedly invoked during local
steering, it substantially increases the per-iteration computational
cost. RMRRT instead first projects the target displacement onto the
tangent space, generates a metric-induced steering direction within that
space, and applies only a lightweight local correction to remove the
small equality drift introduced by the finite extension step. Thus, the
equality-manifold geometry enters the generation of the extension
direction itself rather than only through post-extension projection.
This structural difference reduces the steering cost from
$999.35$\,ms for CBiRRT to $17.56$\,ms for RMRRT, and the total
per-iteration runtime from $1004.22$\,ms to $18.83$\,ms.} Relative to LJCMP, RMRRT incurs a higher per-iteration cost because it
constructs and uses local geometric information online. Specifically,
the per-iteration cost is $18.83$\,ms for RMRRT and $4.86$\,ms for
LJCMP. However, this difference does not directly determine the total
planning time. In Scenario~4, LJCMP required approximately $77.0 \pm 270$ iterations on average to obtain a solution, whereas RMRRT required
approximately $13.6 \pm 17$. This indicates that the additional geometric
computation in RMRRT produces more effective progress per expansion and
substantially reduces the number of expansion cycles required to reach
a feasible solution. Its higher per-iteration cost is therefore offset
by the smaller number of iterations.

\add{Table~\ref{tab:overhead_breakdown}(b) reveals a separate practical
distinction that is not captured by per-iteration timing alone.
Precomputed Graph RRT and LJCMP require a problem-specific roadmap or
learned latent representation to be constructed offline, and these
artifacts generally need to be rebuilt or retrained when the robot,
environment, or constraint specification changes. In contrast, RMRRT
constructs the required local geometry online from quantities already
available during planning, including the constraint Jacobian,
proxy-point Jacobians, and proximity information. The same planner can
therefore be applied directly to a new robot, constraint specification,
or environment without maintaining a separate offline representation. This distinction should be interpreted separately from the quantitative
planning-speed comparison. Table~\ref{tab:overhead_breakdown}(a)
evaluates the online planning efficiency of RMRRT, whereas the offline
analysis in Table~\ref{tab:overhead_breakdown}(b) characterizes
adaptability and ease of deployment under changes to the planning
problem. Overall, RMRRT introduces additional online geometric
computation relative to standard constrained planning, but this overhead
remains controlled and is compensated by more informative local
exploration and fewer expansion cycles. At the same time, the method
avoids problem-specific offline learning or precomputation, providing a
favorable trade-off between online planning efficiency and adaptability
to changes in the robot, constraints, and environment.}

\begin{figure*}[!t]
\centering

\subfloat[]{%
  \includegraphics[width=0.95\textwidth]{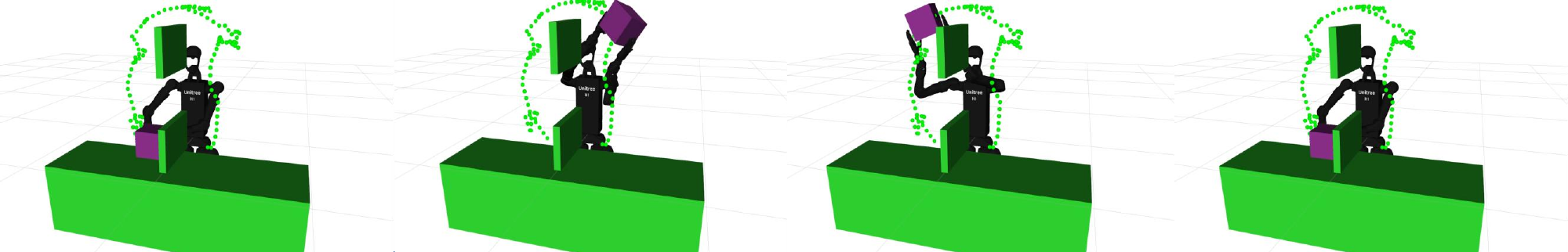}%
  \label{fig6_a}%
}

\subfloat[]{%
  \includegraphics[width=0.95\textwidth]{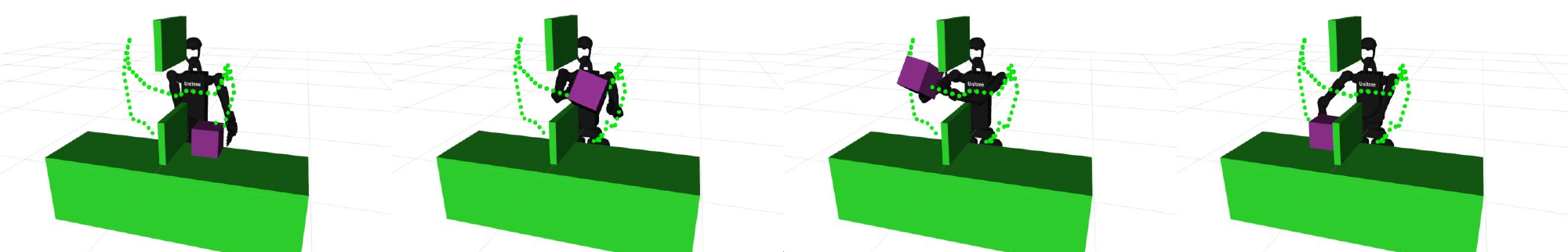}%
  \label{fig6_b}%
}
\caption{
Ablation study on the narrow-passage constrained transfer task in Scenario 5 using a Unitree H1-2 humanoid. (a) No barrier variant. (b) Full RMRRT. The no-barrier variant exhibits a longer detour around the restrictive region, whereas Full RMRRT generates a more direct trajectory through the passage.
}
\label{fig:h1_2_barrier_paths}
\vspace{-0.6em} 
\end{figure*}
\begin{table*}[t]
\centering
\caption{Ablation study on the contribution of the metric-aware components (Scenario 5).}
\label{tab:ablation}
\renewcommand{\arraystretch}{1.2}
\footnotesize
\begin{tabular}{lcccccc}
\toprule
 & \multicolumn{2}{c}{\textbf{Outcome}} & \multicolumn{1}{c}{\textbf{Exploration Quality}} & \multicolumn{3}{c}{\textbf{Cost Breakdown (\(\mu\)s/iter)}} \\
\cmidrule(lr){2-3} \cmidrule(lr){4-4} \cmidrule(lr){5-7}
\textbf{Method} & Time (s) & Path Len. (m) & Valid Ext. Ratio & Check & Steer & Near \\
\midrule
No barrier             & $21.26 \pm 7.34$  & $35.39 \pm 9.84$  & $0.333$          & $12349$          & $1193$ & $5.4$ \\
Steering-only barrier  & $17.46 \pm 7.42$  & $28.13 \pm 7.38$  & $\mathbf{0.349}$ & $\mathbf{8174}$  & $1442$ & $3.4$ \\
Nearest-only barrier   & $18.75 \pm 8.97$  & $25.66 \pm 6.36$  & $0.329$          & $10987$          & $1167$ & $5.1$ \\
\textbf{Full RMRRT}    & $19.54 \pm 11.08$ & $\mathbf{25.21 \pm 2.82}$ & $\mathbf{0.349}$ & $9243$           & $1541$ & $7.3$ \\
\bottomrule
\end{tabular}
\end{table*}

\subsection{Ablation Study}
\label{subsec:ablation_barrier}

To isolate the contribution of the proposed metric-aware components, we
conduct an ablation study on the narrow-passage constrained transfer
task in Scenario~5 using a Unitree H1-2 single-humanoid dual-arm system.
In this setting, densely arranged obstacles strongly restrict the
locally admissible directions and passable regions on the
equality-constrained manifold, making the scenario well suited for
evaluating how each metric-aware component affects exploration.

We compare the following four variants:
\begin{itemize}[leftmargin=1.5em]
    \item \textbf{Full RMRRT}: metric-aware nearest-neighbor selection
    and metric-induced steering.
    
    \item \textbf{Steering-only barrier}: the barrier-informed metric is
    used only for steering, while nearest-neighbor selection remains
    Euclidean.
    
    \item \textbf{Nearest-only barrier}: the barrier-informed metric is
    used only for nearest-neighbor selection, while standard
    tangent-space steering is retained.
    
    \item \textbf{No barrier}: neither nearest-neighbor selection nor
    steering uses the barrier-informed metric.
\end{itemize}

All other components, including the sampling strategy, validity
checking, termination conditions, and major hyperparameters, are kept
identical across the four variants. Table~\ref{tab:ablation} reports
planning time and path length together with the valid extension ratio
and per-iteration computational cost. This decomposition allows us to
distinguish whether the observed gains arise from steering,
nearest-neighbor selection, or their consistent coupling.

As shown in Fig.~\ref{fig:h1_2_barrier_paths} and
Table~\ref{tab:ablation}, applying the barrier-informed metric only to
steering primarily improves local extension quality. The
Steering-only barrier variant achieves the highest valid extension
ratio among the four variants, tied with Full RMRRT, and substantially reduces the
per-iteration validity-checking cost relative to the No barrier
baseline. Both variants that use metric-induced steering reach the same $0.349$ ratio, whereas the two variants without it remain lower, isolating metric-induced steering as the component that improves local extension quality. This indicates that metric-induced steering suppresses
locally unproductive motions toward nearby inequality boundaries before
they develop into repeated steering--rejection cycles. Consequently, the
additional geometric computation required for steering is partially
offset by the reduction in rejection-driven validity-checking overhead.

By contrast, applying the barrier-informed metric only to
nearest-neighbor selection primarily improves parent-node quality and
path formation. The Nearest-only barrier variant significantly reduces
the average path length relative to the No barrier baseline, indicating
that metric-aware parent selection favors nodes with better local
connectability under the induced geometry. However, its valid extension
ratio remains comparable to that of the No barrier variant. Thus,
improved parent selection alone is insufficient to improve local
extension quality when the steering rule itself remains unaware of the
inequality-induced geometry. Even when the planner starts from a more
favorable parent node, the resulting extension may still proceed in a
locally unproductive direction.

Full RMRRT combines both components under the same induced tangent-space
metric. It achieves the shortest average path length and the smallest
path-length standard deviation among the four variants, demonstrating
improvements in both mean trajectory quality and inter-trial
consistency. Overall, metric-induced steering directly improves local
extension quality, whereas metric-aware nearest-neighbor selection
improves parent-node quality and path formation. Each component provides
a meaningful benefit individually, but their consistent use under a
shared local geometry aligns parent selection and local expansion under
the same geometric criterion, yielding the most reliable planning
performance.
\begin{figure*}[!t]
\centering
\includegraphics[width=\textwidth]{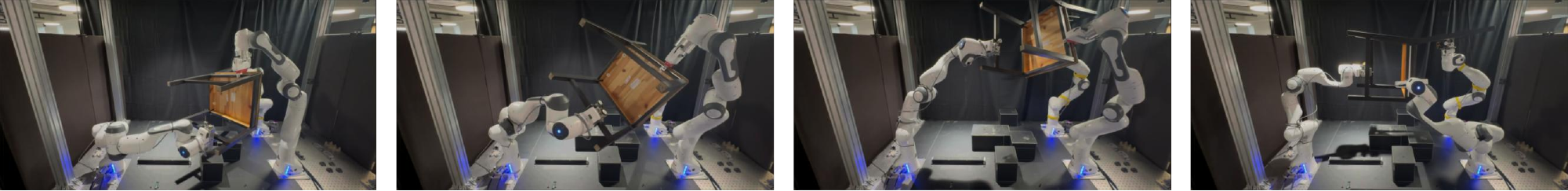}
\caption{Real-world validation of RMRRT on a triple-arm constrained manipulation task. The robots cooperatively manipulate a table under simultaneous equality and inequality constraints, including grasp-maintenance and collision-avoidance requirements.}
\label{fig:real_triple_arm}
\end{figure*}
\begin{table*}[t]
\centering
\caption{\color{red}{Scenario~4 deployment evaluation: robustness to signed-distance noise and equality-constraint residuals.}}
\label{tab:realworld_quant}
\footnotesize
\setlength{\tabcolsep}{3pt}
\renewcommand{\arraystretch}{1.15}
\begin{minipage}[t]{0.47\textwidth}
\begin{tabular*}{\linewidth}[t]{@{\extracolsep{\fill}}lcc@{}}
\toprule
\multicolumn{3}{@{}l}{\emph{\color{red}Robustness to hardware noise}} \\
\midrule
{\color{red}\textbf{$\sigma$ (cm)}} & {\color{red}\textbf{Success (\%)}} & {\color{red}\textbf{Time (s)}} \\
\midrule
{\color{red}$0.0$} & {\color{red}$100$} & {\color{red}$0.26 \pm 0.36$} \\
{\color{red}$0.5$} & {\color{red}$100$} & {\color{red}$0.32 \pm 0.54$} \\
{\color{red}$1.0$} & {\color{red}$100$} & {\color{red}$0.51 \pm 1.38$} \\
{\color{red}$1.5$} & {\color{red}$100$} & {\color{red}$0.35 \pm 0.63$} \\
{\color{red}$2.0$} & {\color{red}$100$} & {\color{red}$0.55 \pm 1.42$} \\
\bottomrule
\end{tabular*}
\end{minipage}\hfill
\begin{minipage}[t]{0.47\textwidth}
\centering\color{red}
\begin{tabular*}{\linewidth}[t]{@{\extracolsep{\fill}}lcc@{}}
\toprule
\multicolumn{3}{@{}l}{\emph{Equality-constraint residual $\|g(q)\|$}} \\
\midrule
\textbf{Planner} & \textbf{Mean $\|g(q)\|$} & \textbf{Max $\|g(q)\|$} \\
\midrule
CBiRRT                & $1.4\times10^{-3}$ & $5.0\times10^{-3}$ \\
Tangent Bundle RRT    & $1.8\times10^{-3}$ & $4.9\times10^{-3}$ \\
Precomputed Graph RRT & $1.3\times10^{-3}$ & $4.9\times10^{-3}$ \\
LJCMP                 & $9.0\times10^{-4}$ & $5.0\times10^{-3}$ \\
\textbf{RMRRT (ours)} & $\mathbf{7.1\times10^{-4}}$ & $\mathbf{4.3\times10^{-3}}$ \\
\bottomrule
\end{tabular*}
\end{minipage}
\end{table*}

\subsection{Real-World Quantitative Evaluation}
\label{subsec:real_world_quantitative}
To evaluate the practical applicability of the proposed method beyond simulation, we validated RMRRT on the triple-arm cooperative manipulation task in Scenario~4. As shown in Fig.~\ref{fig:real_triple_arm}, the three manipulators transport a jointly grasped object while continuously satisfying a closed-chain equality constraint throughout the motion. Accordingly, this experiment evaluates both robustness to distance-estimation uncertainty arising in real-world deployment and the consistency of the inter-robot kinematic relationships required for coordinated physical execution. Specifically, we report the execution success rate and planning time under increasing zero-mean Gaussian noise added to the sensor-measured signed distances, which stresses the planner's robustness to distance-estimation uncertainty encountered in real-world deployment, together with the equality-constraint residuals along the generated paths. The quantitative results are summarized in Table~\ref{tab:realworld_quant}, while additional qualitative demonstrations and execution videos are provided on the project page.

\add{To evaluate robustness to distance-estimation errors, zero-mean Gaussian noise with increasing standard deviation was added to the sensor-measured signed-distance inputs used to construct the metric during hardware execution. As shown in Table~\ref{tab:realworld_quant}, RMRRT maintained its execution success rate across all tested noise levels, while the planning time increased only moderately. This indicates that distance-estimation errors reduce the informativeness of the exploration bias, but do not compromise the feasibility-preserving structure of the proposed method. Table~\ref{tab:realworld_quant} also reports the equality-constraint residuals along the generated paths, including interpolated intermediate states. All methods maintained equality feasibility within the task tolerance, whereas RMRRT achieved the lowest mean and maximum residuals. This result suggests that guiding exploration away from active inequality boundaries while respecting the tangent geometry of the equality manifold requires smaller manifold corrections and yields more numerically consistent projections. Consequently, RMRRT remains robust to distance-estimation uncertainty while preserving the shared-grasp consistency required for coordinated execution.}
{\color{red}
\begin{table*}[t]
\centering
\caption{\color{red}Per-scenario problem scale and proxy composition.}
\label{tab:proxy_params}
\setlength{\tabcolsep}{5pt}
\renewcommand{\arraystretch}{1.1}
\color{red}
\begin{tabular}{lccccccccc}
\toprule
\textbf{Scenario} & \textbf{$n$} & \textbf{$m_{\mathrm{eq}}$} & \textbf{Obs.} & \textbf{EE} & \textbf{Int.} & \textbf{Obj.} & \textbf{Cl.} & \textbf{Cor.} & $\boldsymbol{m_{\mathrm{proxy}}}$ \\
\midrule
1: Single arm    & 7  & 2  & 4 & 1 & 4  & 1 & 1 & 0 & 6  \\
2: Dual arm      & 14 & 6  & 5 & 2 & 8  & 1 & 1 & 0 & 11 \\
3: Dual arm      & 14 & 8  & 5 & 2 & 8  & 1 & 1 & 0 & 11 \\
4: Triple arm    & 21 & 12 & 7 & 3 & 12 & 1 & 1 & 0 & 16 \\
5: H1-2 humanoid & 14 & 6  & 4 & 2 & 4  & 1 & 7 & 8 & 57 \\
\bottomrule
\end{tabular}

\vspace{2pt}
\footnotesize
\emph{Note:} \(n\), \(m_{\mathrm{eq}}\), and Obs. denote the configuration-space dimension, equality-constraint dimension, and obstacle count, respectively. EE/Int./Obj. denote end-effector, intermediate-link, and object-coupling host points, while Cl. and Cor. denote local-cluster size and carried-object corners; \(m_{\mathrm{proxy}}=(\mathrm{EE}+\mathrm{Int}+\mathrm{Obj})\times\mathrm{Cl}+\mathrm{Cor}\).
\end{table*}
}
\section{Conclusion}
\label{section5}

This paper presented RMRRT, a sampling-based motion planning framework
for inequality-aware exploration on equality-constrained manifolds. By
constructing an ambient barrier metric and its induced tangent-space
metric, RMRRT incorporates inequality-induced geometry directly into
steering and nearest-neighbor selection. This biases tree expansion away
from nearby infeasible regions while preserving first-order equality
consistency, rather than relying solely on post-hoc collision rejection.
Experiments across single-arm, multi-arm, and humanoid manipulation
scenarios demonstrated high planning reliability and improved planning
efficiency over representative constrained planning baselines. The
practical applicability of the method was further validated through
real-world hardware experiments. Additional demonstrations and source
code are available on the project page:
\url{https://rcilab.github.io/rmrrt}.

\add{The current implementation instantiates the metric primarily from
signed-distance-based geometric proxy inequalities. This reflects a
practical limitation and design trade-off: because $G(q)$ is evaluated
repeatedly during planning, each metric contribution must remain
informative, inexpensive, and numerically stable. Future work will
investigate learned SDF/RDF models and constraint surrogates that
efficiently approximate inequality margins, gradients, or
positive-semidefinite metric contributions. Such models could extend the
range of inequality information incorporated into RMRRT while preserving
its role as an efficient local geometry module.}

\add{Another direction is to extend the proposed tangent-space geometry beyond
the current bidirectional RRT implementation. Because the induced metric
defines local distance evaluation, nearest-neighbor selection, and
steering independently of the search backbone, it could be incorporated
into broader manifold sampling-based frameworks such as IMACS~\cite{kingston2019exploring}, including metric-aware
variants of PRM, EST, or unidirectional RRT. It may also provide
geometry-aware feasible initializations for constrained trajectory
optimization. In particular, the ambient-space comparison with
\textsc{cuRobo} suggests a complementary relationship between reliable
nonlocal exploration through sampling and efficient optimization-based
trajectory refinement.}

\add{More broadly, the proposed metric may support extensions to
asymptotically optimal planners, although preserving their guarantees
will require further analysis of metric-dependent neighborhoods,
connection radii, and rewiring rules. A complementary direction is to
combine the local inequality-aware geometry of RMRRT with global
geometric representations of constrained configuration spaces, such as
those used for closed-chain and singularity-aware planning. Such a
combination could jointly exploit global structural information and
local avoidance of active inequality boundaries.}

\appendices
{\color{red}
\section{Proxy and Metric Parameters}
\label{app:params}
\begin{figure}[t]
\centering
\subfloat[]{
    \includegraphics[width=1.55in]{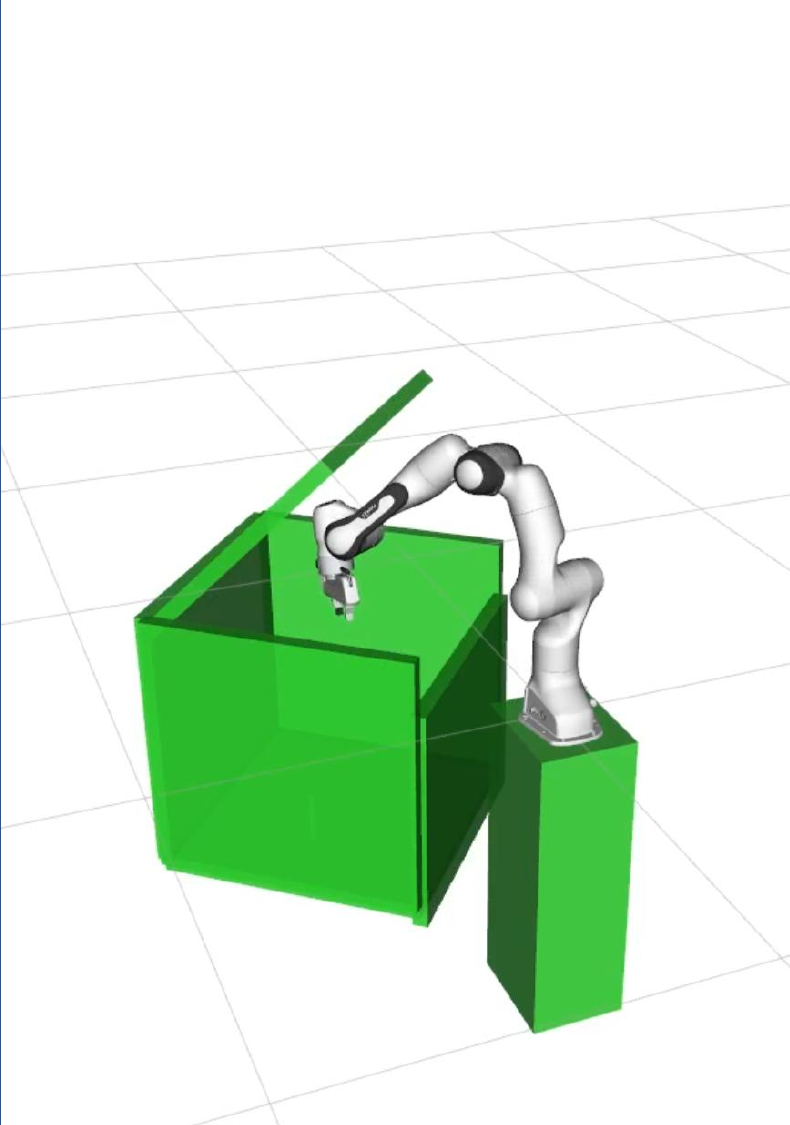}
    \label{fig:mbm_box}
}
\hfil
\subfloat[]{
    \includegraphics[width=1.55in]{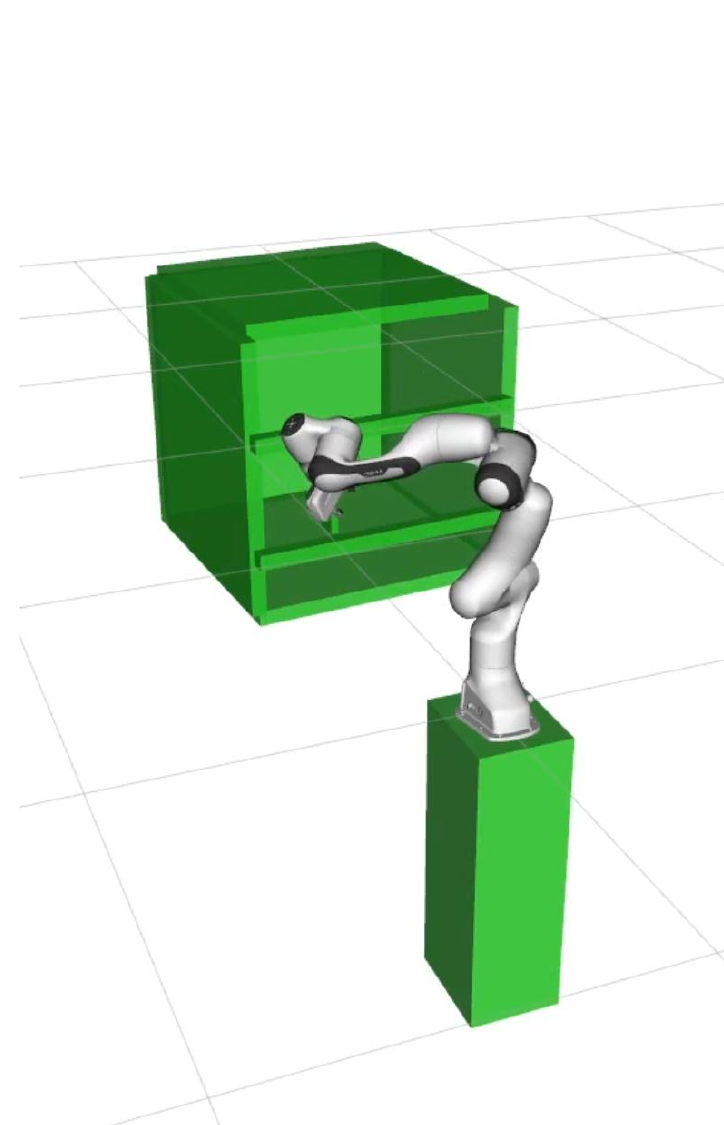}
    \label{fig:mbm_cage}
}
\caption{\color{red}
MotionBenchMaker environments used for the ambient-space evaluation with a
7-DoF Franka Panda. Obstacles are shown in green:
(a) the relatively open \texttt{box} scene and
(b) the more confined \texttt{cage} scene, which contains a narrow interior passage.}
\label{fig:mbm_scenes}
\end{figure}

\begin{table}[t]
\centering
\color{red}
\caption{\color{red}Ambient-space planning performance on MotionBenchMaker without equality constraints.}
\label{tab:ambient_mbm}
\setlength{\tabcolsep}{4pt}
\renewcommand{\arraystretch}{1.12}

\footnotesize
\begin{tabular}{llccc}
\toprule
\textbf{Scene} & \textbf{Planner} & \textbf{Succ.\,(\%)} & \textbf{Time\,(s)} & \textbf{Path\,(rad)} \\
\midrule
\multirow{5}{*}{Box}
 & cuRobo~\cite{sundaralingam2023curobo}   & 100 & $0.062 \pm 0.010$ & $3.64 \pm 0.28$ \\
 & APF~\cite{khatib1986real}               & 23 & $3.22 \pm 2.08$ & $3.77 \pm 0.41$ \\
 & CHOMP~\cite{ratliff2009chomp}           & 78 & $0.42 \pm 0.17$ & $3.59 \pm 0.15$ \\
 & RMP~\cite{ratliff2018riemannian,cheng2021rmpflow}
   & 42 & $2.58 \pm 1.22$ & $3.60 \pm 0.15$ \\
 & \textbf{RMRRT (ours)}
   & \textbf{100} & $0.017 \pm 0.015$ & $4.33 \pm 0.57$ \\
\midrule
\multirow{5}{*}{Cage}
 & cuRobo~\cite{sundaralingam2023curobo}   & 100 & $0.069 \pm 0.031$ & $4.55 \pm 0.21$ \\
 & APF~\cite{khatib1986real}               & 53 & $4.98 \pm 1.87$ & $5.15 \pm 0.36$ \\
 & CHOMP~\cite{ratliff2009chomp}           & 11 & $0.81 \pm 0.42$ & $5.38 \pm 0.64$ \\
 & RMP~\cite{ratliff2018riemannian,cheng2021rmpflow}
   & 39 & $5.66 \pm 1.97$ & $5.29 \pm 0.56$ \\
 & \textbf{RMRRT (ours)}
   & \textbf{100} & $0.081 \pm 0.103$ & $6.44 \pm 1.60$ \\
\midrule
\multirow{5}{*}{Overall}
 & cuRobo~\cite{sundaralingam2023curobo}   & 100 & $0.065 \pm 0.023$ & $4.09 \pm 0.52$ \\
 & APF~\cite{khatib1986real}               & 38 & $4.43 \pm 2.10$ & $4.73 \pm 0.74$ \\
 & CHOMP~\cite{ratliff2009chomp}           & 44.5 & $0.45 \pm 0.22$ & $3.69 \pm 0.47$ \\
 & RMP~\cite{ratliff2018riemannian,cheng2021rmpflow}
   & 40.5 & $4.72 \pm 2.27$ & $4.78 \pm 0.91$ \\
 & \textbf{RMRRT (ours)}
   & \textbf{100} & $0.049 \pm 0.081$ & $5.38 \pm 1.60$ \\
\bottomrule
\end{tabular}
\end{table}
{\color{red}
\noindent To ensure reproducibility of the one-time proxy setup,
Table~\ref{tab:proxy_params} reports, for each scenario, the
configuration-space dimension $n$, equality-constraint dimension
$m_{\mathrm{eq}}$, obstacle count, proxy composition, and resulting
$m_{\mathrm{proxy}}$. Scenarios~1--4 use
$\delta_{\mathrm{act}}=0.15$\,m,
$\delta_{\mathrm{safe}}=0.02$\,m,
$\alpha_{\mathrm{ee}}=\alpha_{\mathrm{int}}=\alpha_{\mathrm{obj}}=1.0$,
and $\psi_{\max}=10^4$.
Scenario~5, the H1-2 humanoid narrow-passage task, uses a milder barrier
with
$\delta_{\mathrm{act}}=0.03$\,m,
$\delta_{\mathrm{safe}}=0.01$\,m,
$\alpha_{\mathrm{ee}}=\alpha_{\mathrm{int}}=\alpha_{\mathrm{obj}}=0.05$,
and $\psi_{\max}=50$,
to avoid overly conservative expansion in the narrow passage. The regularization $\lambda=1.0$ and kernel offset
$\epsilon_d=10^{-3}$ are shared across all scenarios.
For the H1-2 humanoid, $\mathrm{Cl.}=7$ denotes one host point and six
axis-aligned local offsets.}

\section{Ambient-Space Comparison on MotionBenchMaker}
\label{app:ambient}
\noindent This appendix isolates the inequality-aware component of RMRRT by
removing all equality constraints. We evaluate a 7-DoF Franka Panda
on the \texttt{box} and \texttt{cage} scenes of MotionBenchMaker
\cite{chamzas2022motionbenchmaker}, as shown in
Fig.~\ref{fig:mbm_scenes}. RMRRT is compared with
APF~\cite{khatib1986real}, CHOMP~\cite{ratliff2009chomp},
RMP~\cite{ratliff2018riemannian,cheng2021rmpflow}, and the
GPU-parallel trajectory optimizer \textsc{cuRobo}
\cite{sundaralingam2023curobo}. All methods are evaluated in the same
inequality-only collision environment, and success rate, planning
time, and configuration-space path length are reported in
Table~\ref{tab:ambient_mbm}. The local metric and trajectory-optimization baselines exhibit clear
scene-dependent behavior. CHOMP performs well in the relatively open
\texttt{box} scene but degrades in the more enclosed \texttt{cage}
scene, while APF and RMP show similar sensitivity to local minima.
RMRRT achieves a $100\%$ success rate in both scenes, indicating that
barrier-informed local geometry combined with sampling-based tree
exploration provides more consistent search behavior across different
scene structures. \textsc{cuRobo} also achieves a $100\%$ success rate and produces the
shortest average paths. Because \textsc{cuRobo} and RMRRT rely on
different computational paradigms---GPU-parallel trajectory
optimization and CPU-based sampling, respectively---their planning
times are reported for completeness but are not interpreted as a
direct hardware-normalized speed comparison.
\begin{figure}[t]
\centering
\includegraphics[width=\columnwidth]{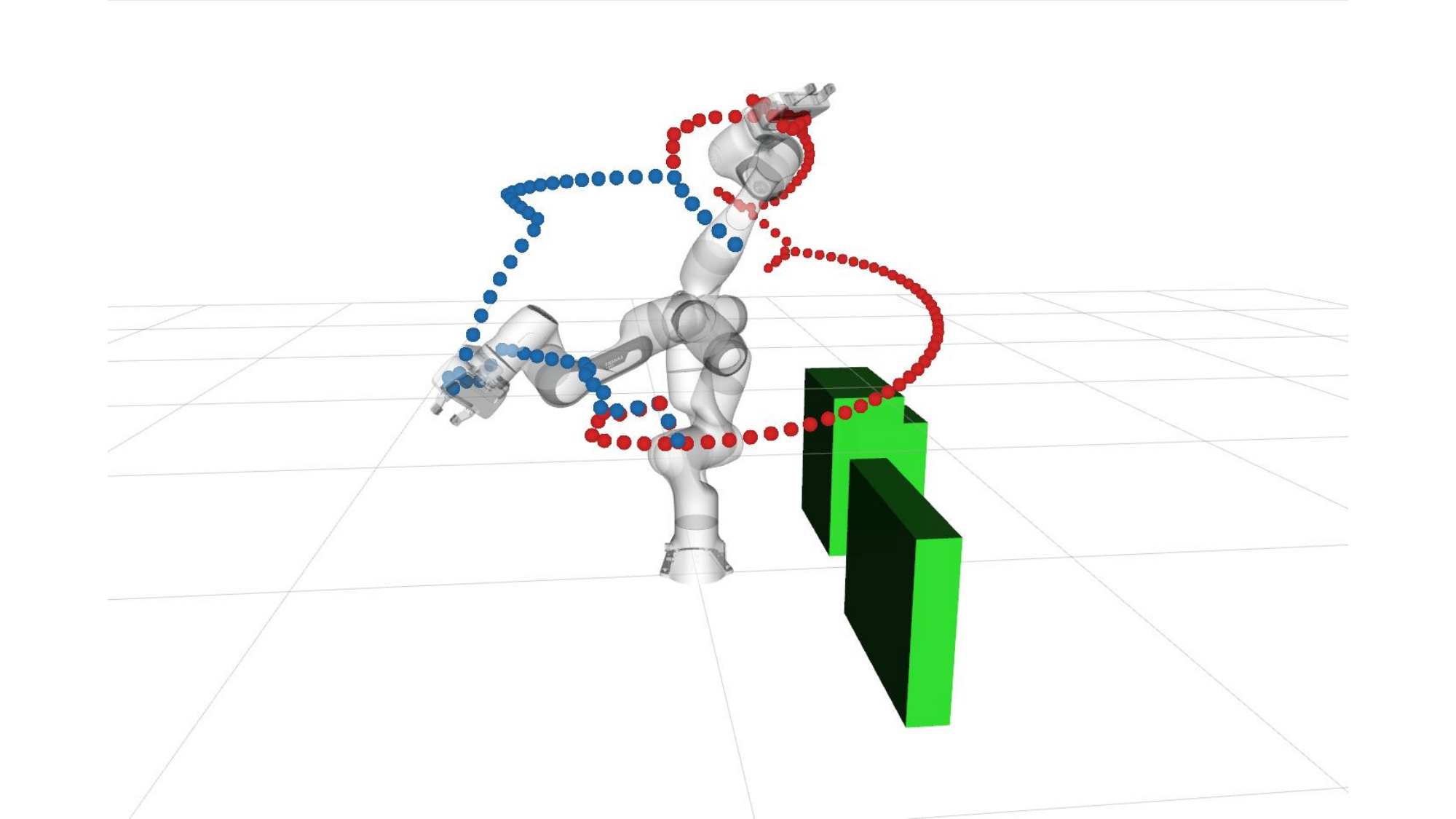}
\caption{\color{red}
Near-singular reaching paths with and without the manipulability-aware metric. Without the metric, RMRRT approaches a poorly conditioned, nearly extended posture; with the metric, it follows a better-conditioned path. Red and blue spheres denote the corresponding end-effector trajectories.}
\label{fig:appendixC_postures}
\end{figure}

\begin{figure}[t]
\centering
\includegraphics[width=\columnwidth]{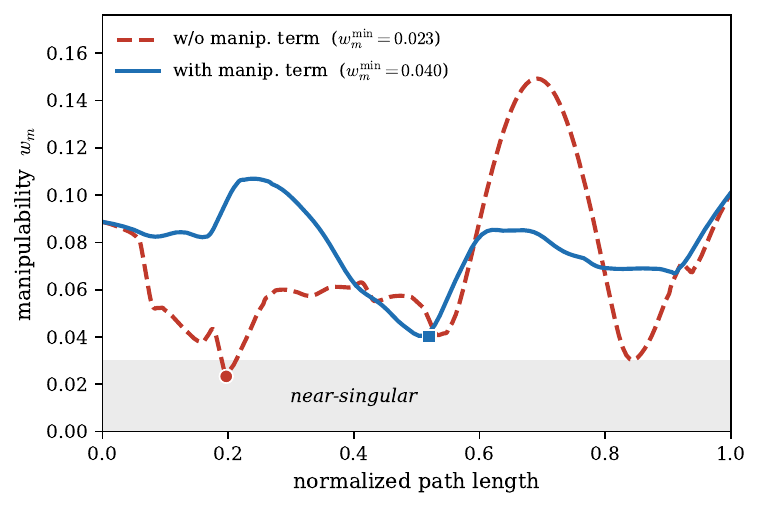}
\caption{\color{red}
Translational manipulability along the near-singular RMRRT path. Without the manipulability-aware metric (red dashed), the path enters the near-singular region; with the metric (blue solid), the minimum manipulability is increased.}
\label{fig:appendixC_manip}
\end{figure}
\color{red}
{\section{Singularity Avoidance via a Manipulability Metric Term}
\label{app:manip}
\noindent This appendix provides a qualitative demonstration of the metric
extensibility discussed in Remark~\ref{remark2}. We augment the ambient
metric with the manipulability-margin contribution
$G_{\mathrm{man}}(q)$ defined therein, while leaving all other components
of RMRRT unchanged. The experiment considers a deliberately constructed
near-singular reaching task for a 7-DoF Franka Panda. Along each planned
path, we evaluate the translational manipulability measure
\[
w_m(q)
=
\sqrt{\det\!\left(J_p(q)J_p(q)^\top\right)},
\]
where $J_p(q)$ denotes the end-effector position Jacobian. As shown in Fig.~\ref{fig:appendixC_postures}, the baseline RMRRT path
approaches a nearly fully extended and poorly conditioned posture,
whereas incorporating $G_{\mathrm{man}}(q)$ produces a more flexed and
better-conditioned motion. The corresponding profiles in
Fig.~\ref{fig:appendixC_manip} show that the minimum manipulability
increases from approximately $0.023$ to $0.040$. Near low-margin
configurations, the additional metric term elongates motion along the
manipulability-gradient direction, thereby biasing tree expansion away
from the near-singular region and toward better-conditioned alternatives. These results provide a qualitative proof of concept that
singularity-related information can be incorporated into RMRRT through
an additional local Riemannian metric term. This experiment does not establish singularity traversal or restore the regularity assumptions required by the tangent-space projection and completeness-compatibility analysis.
}

}
\begin{table}[t]
\color{red}
\centering
\caption{\color{red}Effect of candidate size $K$ on metric-aware nearest-neighbor selection in Scenario~4.}
\label{tab:k_ablation}
\footnotesize
\setlength{\tabcolsep}{4pt}
\begin{tabular}{lcccc}
\toprule
\textbf{Candidate size} & \textbf{Time (s)} & \textbf{Succ.\ (\%)} & \textbf{Path} & \textbf{NN (ms/it)} \\
\midrule
$K{=}1$ (Eucl.\ NN)   & $0.33\pm0.63$ & $100$ & $11.47\pm4.19$ & $0.016$ \\
Adaptive $K$          & $\mathbf{0.26\pm0.36}$ & $100$ & $11.62\pm5.37$ & $0.037$ \\
Large fixed $K$       & $6.11\pm49.9$ & $100$ & $11.46\pm4.02$ & $0.163$ \\
\bottomrule
\end{tabular}
\end{table}
{\color{red}
\section{Candidate-Size Analysis for Metric-Aware Nearest-Neighbor Selection}
\label{app:kablation}
\noindent This appendix evaluates the adaptive candidate-size rule in
\eqref{eq:K_adaptive}. We vary only the candidate-pool size $K$ on
Scenario~4, the most challenging 21-DoF triple-arm task, while keeping
all other components of RMRRT fixed. We compare three regimes:
$K=1$, which disables metric-based reranking; the proposed adaptive
rule; and a large fixed $K$ that approximates exhaustive metric search
over the tree. As shown in Table~\ref{tab:k_ablation}, all three regimes solve every
instance, but differ substantially in planning time and runtime
variability. With $K=1$, parent selection depends only on Euclidean
proximity over the tangent-projected displacement, so a nearby node may
still be poorly connected under the barrier-induced metric. A large
fixed $K$ increases the likelihood of selecting a metric-favorable
parent, but its reranking cost grows with the tree and yields diminishing
returns beyond a moderate candidate-set size. The adaptive rule balances these effects by using a compact candidate
set when the tree is small and increasing $K$ logarithmically as the
tree grows. It therefore retains the benefit of metric-aware parent
selection without the computational burden of exhaustive reranking,
yielding the lowest and most consistent planning time.
}



\bibliographystyle{IEEEtran}
\bibliography{IEEEabrv, references}

@article{mu2025efficient,
  title={Efficient Exploration in Large State-Action Space Through Structured Action Space for Learning Multirobots Motion Planning},
  author={Mu, Chaoxu and Jiang, Zewu and Zhang, Junnan and Wang, Ke and Xu, Xin and Yi, Jun},
  journal={IEEE Transactions on Control Systems Technology},
  year={2025},
  publisher={IEEE}
}

@inproceedings{csucan2012motion,
  title={Motion planning with constraints using configuration space approximations},
  author={{\c{S}}ucan, Ioan A and Chitta, Sachin},
  booktitle={2012 IEEE/RSJ International Conference on Intelligent Robots and Systems},
  pages={1904--1910},
  year={2012},
  organization={IEEE}
}

@inproceedings{sundaralingam2023curobo,
  title={cu{R}obo: Parallelized collision-free robot motion generation},
  author={Sundaralingam, Balakumar and Hari, Siva Kumar Sastry and Fishman, Adam and Garrett, Caelan and Van Wyk, Karl and Blukis, Valts and Millane, Alexander and Oleynikova, Helen and Handa, Ankur and Ramos, Fabio and Fox, Dieter and Boots, Byron},
  booktitle={2023 IEEE International Conference on Robotics and Automation (ICRA)},
  pages={8112--8119},
  year={2023},
  organization={IEEE}
}

@article{chamzas2022motionbenchmaker,
  title={{MotionBenchMaker}: A tool to generate and benchmark motion planning datasets},
  author={Chamzas, Constantinos and Quintero-Pe{\~n}a, Carlos and Kingston, Zachary and Orthey, Andreas and Rakita, Daniel and Gleicher, Michael and Toussaint, Marc and Kavraki, Lydia E},
  journal={IEEE Robotics and Automation Letters},
  volume={7},
  number={2},
  pages={882--889},
  year={2022},
  publisher={IEEE}
}

@inproceedings{zhang2025local,
  title={Local Path Optimization in The Latent Space Using Learned Distance Gradient},
  author={Zhang, Jiawei and Bai, Chengchao and Pan, Wei and Liu, Tianhang and Guo, Jifeng},
  booktitle={2025 IEEE/RSJ International Conference on Intelligent Robots and Systems (IROS)},
  pages={15940--15946},
  year={2025},
  organization={IEEE}
}

@article{cheng2021rmpflow,
  title={Rmpflow: A geometric framework for generation of multitask motion policies},
  author={Cheng, Ching-An and Mukadam, Mustafa and Issac, Jan and Birchfield, Stan and Fox, Dieter and Boots, Byron and Ratliff, Nathan},
  journal={IEEE Transactions on Automation Science and Engineering},
  volume={18},
  number={3},
  pages={968--987},
  year={2021},
  publisher={IEEE}
}

@inproceedings{stilman2007task,
  title={Task constrained motion planning in robot joint space},
  author={Stilman, Mike},
  booktitle={2007 IEEE/RSJ International Conference on Intelligent Robots and Systems},
  pages={3074--3081},
  year={2007},
  organization={IEEE}
}

@article{xian2017closed,
  title={Closed-chain manipulation of large objects by multi-arm robotic systems},
  author={Xian, Zhou and Lertkultanon, Puttichai and Pham, Quang-Cuong},
  journal={IEEE Robotics and Automation Letters},
  volume={2},
  number={4},
  pages={1832--1839},
  year={2017},
  publisher={IEEE}
}

@inproceedings{berenson2009manipulation,
  title={Manipulation planning on constraint manifolds},
  author={Berenson, Dmitry and Srinivasa, Siddhartha S and Ferguson, Dave and Kuffner, James J},
  booktitle={2009 IEEE international conference on robotics and automation},
  pages={625--632},
  year={2009},
  organization={IEEE}
}

@article{kingston2018sampling,
  title={Sampling-based methods for motion planning with constraints},
  author={Kingston, Zachary and Moll, Mark and Kavraki, Lydia E},
  journal={Annual review of control, robotics, and autonomous systems},
  volume={1},
  number={1},
  pages={159--185},
  year={2018},
  publisher={Annual Reviews}
}

@article{jaillet2012path,
  title={Path planning under kinematic constraints by rapidly exploring manifolds},
  author={Jaillet, L{\'e}onard and Porta, Josep M},
  journal={IEEE Transactions on Robotics},
  volume={29},
  number={1},
  pages={105--117},
  year={2012},
  publisher={IEEE}
}

@inproceedings{suh2011tangent,
  title={Tangent space RRT: A randomized planning algorithm on constraint manifolds},
  author={Suh, Chansu and Um, Terry Taewoong and Kim, Beobkyoon and Noh, Hakjong and Kim, Munsang and Park, Frank C},
  booktitle={2011 IEEE International Conference on Robotics and Automation},
  pages={4968--4973},
  year={2011},
  organization={IEEE}
}

@inproceedings{jaillet2016path,
  title={Path planning with loop closure constraints using an atlas-based RRT},
  author={Jaillet, L{\'e}onard and Porta, Josep M},
  booktitle={Robotics Research: The 15th International Symposium ISRR},
  pages={345--362},
  year={2016},
  organization={Springer}
}

@article{kim2016tangent,
  title={Tangent bundle RRT: A randomized algorithm for constrained motion planning},
  author={Kim, Beobkyoon and Um, Terry Taewoong and Suh, Chansu and Park, Frank C},
  journal={Robotica},
  volume={34},
  number={1},
  pages={202--225},
  year={2016},
  publisher={Cambridge University Press}
}

@article{quiroz2019whole,
  title={Whole-body control with (self) collision avoidance using vector field inequalities},
  author={Quiroz-Oma{\~n}a, Juan Jos{\'e} and Adorno, Bruno Vilhena},
  journal={IEEE Robotics and Automation Letters},
  volume={4},
  number={4},
  pages={4048--4053},
  year={2019},
  publisher={IEEE}
}

@article{guo2012new,
  title={A new inequality-based obstacle-avoidance MVN scheme and its application to redundant robot manipulators},
  author={Guo, Dongsheng and Zhang, Yunong},
  journal={IEEE Transactions on Systems, Man, and Cybernetics, Part C (Applications and Reviews)},
  volume={42},
  number={6},
  pages={1326--1340},
  year={2012},
  publisher={IEEE}
}

@inproceedings{grassmann2018smooth,
  title={Smooth point-to-point trajectory planning in $ SE $(3) with self-collision and joint constraints avoidance},
  author={Grassmann, Reinhard and Johannsmeier, Lars and Haddadin, Sami},
  booktitle={2018 IEEE/RSJ International Conference on Intelligent Robots and Systems (IROS)},
  pages={1--9},
  year={2018},
  organization={IEEE}
}

@inproceedings{hsu2003bridge,
  title={The bridge test for sampling narrow passages with probabilistic roadmap planners},
  author={Hsu, David and Jiang, Tingting and Reif, John and Sun, Zheng},
  booktitle={2003 IEEE international conference on robotics and automation (cat. no. 03CH37422)},
  volume={3},
  pages={4420--4426},
  year={2003},
  organization={IEEE}
}

@article{pan2016fast,
  title={Fast probabilistic collision checking for sampling-based motion planning using locality-sensitive hashing},
  author={Pan, Jia and Manocha, Dinesh},
  journal={The International Journal of Robotics Research},
  volume={35},
  number={12},
  pages={1477--1496},
  year={2016},
  publisher={Sage Publications Sage UK: London, England}
}

@InProceedings{Hu_2021_CVPR,
    author    = {Hu, Peiyun and Huang, Aaron and Dolan, John and Held, David and Ramanan, Deva},
    title     = {Safe Local Motion Planning With Self-Supervised Freespace Forecasting},
    booktitle = {Proceedings of the IEEE/CVF Conference on Computer Vision and Pattern Recognition (CVPR)},
    month     = {June},
    year      = {2021},
    pages     = {12732-12741}
}

@article{cai2025just,
  title={Just-in-Time Informed Trees: Manipulability-Aware Asymptotically Optimized Motion Planning},
  author={Cai, Kuanqi and Zhang, Liding and Su, Xinwen and Chen, Kejia and Wang, Chaoqun and Haddadin, Sami and Knoll, Alois and Ajoudani, Arash and Figueredo, Luis},
  journal={IEEE/ASME Transactions on Mechatronics},
  year={2025},
  publisher={IEEE}
}

@article{li2024quadratic,
  title={A quadratic programming solution to position-level repetitive motion planning and obstacle avoidance of joint-constrained robots},
  author={Li, Weibing and Yi, Zilian and Rao, Ruiqi and Guo, Dongsheng and Pan, Yongping and Huang, Kai},
  journal={IEEE Transactions On Industrial Electronics},
  year={2024},
  publisher={IEEE}
}

@inproceedings{ratliff2009chomp,
  title={CHOMP: Gradient optimization techniques for efficient motion planning},
  author={Ratliff, Nathan and Zucker, Matt and Bagnell, J Andrew and Srinivasa, Siddhartha},
  booktitle={2009 IEEE international conference on robotics and automation},
  pages={489--494},
  year={2009},
  organization={IEEE}
}

@article{khatib1986real,
  title={Real-time obstacle avoidance for manipulators and mobile robots},
  author={Khatib, Oussama},
  journal={The international journal of robotics research},
  volume={5},
  number={1},
  pages={90--98},
  year={1986},
  publisher={Sage Publications Sage CA: Thousand Oaks, CA}
}

@inproceedings{manjunath2021safe,
  title={Safe and robust motion planning for dynamic robotics via control barrier functions},
  author={Manjunath, Aniketh and Nguyen, Quan},
  booktitle={2021 60th IEEE Conference on Decision and Control (CDC)},
  pages={2122--2128},
  year={2021},
  organization={IEEE}
}

@inproceedings{yang2019sampling,
  title={Sampling-based motion planning via control barrier functions},
  author={Yang, Guang and Vang, Bee and Serlin, Zachary and Belta, Calin and Tron, Roberto},
  booktitle={Proceedings of the 2019 3rd International Conference on Automation, Control and Robots},
  pages={22--29},
  year={2019}
}

@article{bordalba2020randomized,
  title={A randomized kinodynamic planner for closed-chain robotic systems},
  author={Bordalba, Ricard and Ros, Llu{\'\i}s and Porta, Josep M},
  journal={IEEE Transactions on Robotics},
  volume={37},
  number={1},
  pages={99--115},
  year={2020},
  publisher={IEEE}
}

@article{jang2022motion,
  title={Motion planning for closed-chain constraints based on probabilistic roadmap with improved connectivity},
  author={Jang, Keunwoo and Baek, Jiyeong and Park, Suhan and Park, Jaeheung},
  journal={IEEE/ASME Transactions on Mechatronics},
  volume={27},
  number={4},
  pages={2035--2043},
  year={2022},
  publisher={IEEE}
}

@article{park2024constrained,
  title={A constrained motion planning method exploiting learned latent space for high-dimensional state and constraint spaces},
  author={Park, Suhan and Jeon, Suhyun and Park, Jaeheung},
  journal={IEEE/ASME Transactions on Mechatronics},
  volume={29},
  number={4},
  pages={3001--3009},
  year={2024},
  publisher={IEEE}
}

@article{qureshi2022compnetx,
  title={Constrained Motion Planning Networks X},
  author={Qureshi, Ahmed H. and Dong, Jiangeng and Baig, Asfiya and Yip, Michael C.},
  journal={IEEE Transactions on Robotics},
  volume={38},
  number={2},
  pages={868--886},
  year={2022},
  publisher={IEEE}
}

@article{liu2025planning,
  title={A planning framework for complex flipping manipulation of multiple mobile manipulators},
  author={Liu, Wenhang and Ren, Meng and Song, Kun and Wang, Michael Yu and Xiong, Zhenhua},
  journal={IEEE Robotics and Automation Letters},
  year={2025},
  publisher={IEEE}
}

@article{dogar2019multi,
  title={Multi-robot grasp planning for sequential assembly operations},
  author={Dogar, Mehmet and Spielberg, Andrew and Baker, Stuart and Rus, Daniela},
  journal={Autonomous Robots},
  volume={43},
  number={3},
  pages={649--664},
  year={2019},
  publisher={Springer}
}

@article{zimmermann2020multi,
  title={A multi-level optimization framework for simultaneous grasping and motion planning},
  author={Zimmermann, Simon and Hakimifard, Ghazal and Zamora, Miguel and Poranne, Roi and Coros, Stelian},
  journal={IEEE Robotics and Automation Letters},
  volume={5},
  number={2},
  pages={2966--2972},
  year={2020},
  publisher={IEEE}
}

@article{shi2025distributed,
  title={Distributed quadratic programming for safe human multirobot interaction under task constraints},
  author={Shi, Kaige and Liu, Jinxin and Chang, Jindong and Hu, Guoqiang},
  journal={IEEE/ASME Transactions on Mechatronics},
  year={2025},
  publisher={IEEE}
}

@inproceedings{alonso2015local,
  title={Local motion planning for collaborative multi-robot manipulation of deformable objects},
  author={Alonso-Mora, Javier and Knepper, Ross and Siegwart, Roland and Rus, Daniela},
  booktitle={2015 IEEE international conference on robotics and automation (ICRA)},
  pages={5495--5502},
  year={2015},
  organization={IEEE}
}

@article{kavraki1996probabilistic,
  title={Probabilistic roadmaps for path planning in high-dimensional configuration spaces},
  author={Kavraki, Lydia E and {\v{S}}vestka, Petr and Latombe, Jean-Claude and Overmars, Mark H},
  journal={IEEE Transactions on Robotics and Automation},
  volume={12},
  number={4},
  pages={566--580},
  year={1996},
  publisher={IEEE}
}

@article{lavalle2001randomized,
  title={Randomized kinodynamic planning},
  author={LaValle, Steven M and Kuffner, James J},
  journal={The International Journal of Robotics Research},
  volume={20},
  number={5},
  pages={378--400},
  year={2001},
  publisher={SAGE Publications}
}

@inproceedings{hsu1997expansive,
  title={Path planning in expansive configuration spaces},
  author={Hsu, David and Latombe, Jean-Claude and Motwani, Rajeev},
  booktitle={Proceedings of International Conference on Robotics and Automation},
  volume={3},
  pages={2719--2726},
  year={1997},
  organization={IEEE}
}

@article{kingston2019exploring,
  title={Exploring implicit spaces for constrained sampling-based planning},
  author={Kingston, Zachary and Moll, Mark and Kavraki, Lydia E},
  journal={The International Journal of Robotics Research},
  volume={38},
  number={10-11},
  pages={1151--1178},
  year={2019},
  publisher={SAGE Publications}
}

@inproceedings{bonalli2019trajectory,
  title={Trajectory optimization on manifolds: A theoretically-guaranteed embedded sequential convex programming approach},
  author={Bonalli, Riccardo and Bylard, Andrew and Cauligi, Abhishek and Lew, Thomas and Pavone, Marco},
  booktitle={Robotics: Science and Systems (RSS)},
  year={2019}
}

@article{bordalba2022direct,
  title={Direct collocation methods for trajectory optimization in constrained robotic systems},
  author={Bordalba, Ricard and Schoels, Tobias and Ros, Llu{\'\i}s and Porta, Josep M and Diehl, Moritz},
  journal={IEEE Transactions on Robotics},
  volume={39},
  number={1},
  pages={183--202},
  year={2022},
  publisher={IEEE}
}

@inproceedings{cohn2024constrained,
  title={Constrained bimanual planning with analytic inverse kinematics},
  author={Cohn, Thomas and Shaw, Seiji and Simchowitz, Max and Tedrake, Russ},
  booktitle={2024 IEEE International Conference on Robotics and Automation (ICRA)},
  pages={6935--6942},
  year={2024},
  organization={IEEE}
}

@article{bohigas2013planning,
  title={Planning singularity-free paths on closed-chain manipulators},
  author={Bohigas, Oriol and Henderson, Michael E and Ros, Llu{\'\i}s and Manubens, Montserrat and Porta, Josep M},
  journal={IEEE Transactions on Robotics},
  volume={29},
  number={4},
  pages={888--898},
  year={2013},
  publisher={IEEE}
}

@article{ratliff2018riemannian,
  title={Riemannian motion policies},
  author={Ratliff, Nathan D and Issac, Jan and Kappler, Daniel and Birchfield, Stan and Fox, Dieter},
  journal={arXiv preprint arXiv:1801.02854},
  year={2018}
}

@book{boyd2004convex,
  title={Convex Optimization},
  author={Boyd, Stephen and Vandenberghe, Lieven},
  year={2004},
  publisher={Cambridge University Press}
}

@book{nocedal2006numerical,
  title={Numerical Optimization},
  author={Nocedal, Jorge and Wright, Stephen J},
  edition={2nd},
  year={2006},
  publisher={Springer}
}

@book{golub2013matrix,
  title={Matrix Computations},
  author={Golub, Gene H and Van Loan, Charles F},
  edition={4th},
  year={2013},
  publisher={Johns Hopkins University Press}
}

\vfill

\end{document}